\documentclass[letterpaper]{article} 
\usepackage{aaai24}  
\usepackage{times}  
\usepackage{helvet}  
\usepackage{courier}  
\usepackage[hyphens]{url}  
\usepackage{graphicx} 
\urlstyle{rm} 
\usepackage{natbib}  
\usepackage{caption} 
\frenchspacing  
\setlength{\pdfpagewidth}{8.5in} 
\setlength{\pdfpageheight}{11in} 
%
\usepackage{algorithm}
\usepackage{algorithmic}

\usepackage{algorithm}
\usepackage{algorithmic}
\usepackage{amsmath}
\usepackage{amssymb}
\usepackage{bbding}
\usepackage{multirow}
\usepackage{booktabs}

%
\usepackage{newfloat}
\usepackage{listings}
\DeclareCaptionStyle{ruled}{labelfont=normalfont,labelsep=colon,strut=off} 
\lstset{%
	basicstyle={\footnotesize\ttfamily},
	numbers=left,numberstyle=\footnotesize,xleftmargin=2em,
	aboveskip=0pt,belowskip=0pt,%
	showstringspaces=false,tabsize=2,breaklines=true}
\floatstyle{ruled}
\newfloat{listing}{tb}{lst}{}
\floatname{listing}{Listing}
%
\pdfinfo{
/TemplateVersion (2023.1)
}
\newcommand{\gright}{\textcolor[RGB]{8,156,86}{\Checkmark}}
\newcommand{\rnot}{\textcolor{red}{\XSolidBrush}}
\newcommand{\dtn}{\mathcal{D}^\mathcal{T}}
\usepackage{color} 

%
\usepackage{newfloat}
\usepackage{listings}
\DeclareCaptionStyle{ruled}{labelfont=normalfont,labelsep=colon,strut=off} 
\lstset{%
	basicstyle={\footnotesize\ttfamily},
	numbers=left,numberstyle=\footnotesize,xleftmargin=2em,
	aboveskip=0pt,belowskip=0pt,%
	showstringspaces=false,tabsize=2,breaklines=true}
\floatstyle{ruled}
\newfloat{listing}{tb}{lst}{}
\floatname{listing}{Listing}
%
\pdfinfo{
/TemplateVersion (2024.1)
}

\usepackage{booktabs}
\usepackage{amsmath}
\usepackage{amssymb}
\usepackage{amsfonts}
\usepackage{stmaryrd}
\usepackage{enumitem}
\usepackage{multirow}
\usepackage{dsfont}

\usepackage{color}
\definecolor{grey}{rgb}{0.5,0.5,0.5}

\setcounter{secnumdepth}{0} 

%


\title{SCMix: Stochastic Compound Mixing for Open Compound Domain Adaptation in Semantic Segmentation}





\author{
    Kai Yao\equalcontrib\textsuperscript{\rm 1,}\textsuperscript{\rm 2}, Zhaorui Tan\equalcontrib\textsuperscript{\rm 1,}\textsuperscript{\rm 2}, Zixian Su\textsuperscript{\rm 1,}\textsuperscript{\rm 2}, Xi Yang\textsuperscript{\rm 2}, Jie Sun\textsuperscript{\rm 2}, Kaizhu Huang\textsuperscript{\rm 3}\thanks{Corresponding author}}
\affiliations{
    \textsuperscript{\rm 1}Department of Intelligent Science, Xi’an Jiaotong-Liverpool University\\
    \textsuperscript{\rm 2}Department of Computer Science, University of Liverpool \\
    \textsuperscript{\rm 3} Data Science Research Center, Duke Kunshan University
}

\usepackage{bibentry}

\begin{document}

\maketitle

\begin{abstract}
Open compound domain adaptation (OCDA) aims to transfer knowledge from a labeled source domain to a mix of unlabeled homogeneous compound target domains while generalizing to open unseen domains. Existing OCDA methods solve the intra-domain gaps by a divide-and-conquer strategy, which divides the problem into several individual and parallel domain adaptation (DA) tasks. Such approaches often contain multiple sub-networks or stages, which may constrain the model's performance. In this work, starting from the general DA theory, we establish the generalization bound for the setting of OCDA. Built upon this, we argue that conventional OCDA approaches may substantially underestimate the inherent variance inside the compound target domains for model generalization. We subsequently present \textbf{Stochastic Compound Mixing (SCMix)}, an augmentation strategy with the primary objective of mitigating the divergence between source and mixed target distributions. We provide theoretical analysis to substantiate the superiority of SCMix and prove that the previous methods are sub-groups of our methods. Extensive experiments show that our method attains a lower empirical risk on OCDA semantic segmentation tasks, thus supporting our theories. Combining the transformer architecture, SCMix achieves a notable performance boost compared to the SoTA results.
\end{abstract}

\section{Introduction}

Despite the notable success of deep-learning-based methods in semantic segmentation, these methods often demand a considerable amount of pixel-wise annotated data.  To mitigate the cost associated with data collection and annotation, synthetic datasets~\cite{GTA,synthia} have been suggested as an alternative. However, models trained on synthetic data tend to struggle with poor generalization to real images.  
To tackle this challenge, unsupervised domain adaptation (UDA)~\cite{dacs,daformer,proda} has been proposed to transfer 
knowledge from labeled source domains to unlabeled target images. UDA aims to bridge the gap between domains, enabling models to generalize effectively to new and unseen data.
Previous methods~\cite{adaptsegnet,hoffman2018cycada} commonly rely on the assumption that the target data is derived from a single homogeneous domain. However, this may produce suboptimal results when the target data is composed of various subdomains.
Towards a realistic DA setting, Liu et al.~\cite{cdas} proposed the concept of Open Compound Domain Adaptation (OCDA), which incorporates mixed domains in the target without domain labels. This strategy aims to enhance model generalization by adapting to a compound target domain, resulting in better performance when faced with unseen domains.

\begin{figure}[t]
\centering
\includegraphics[width=0.9\linewidth]{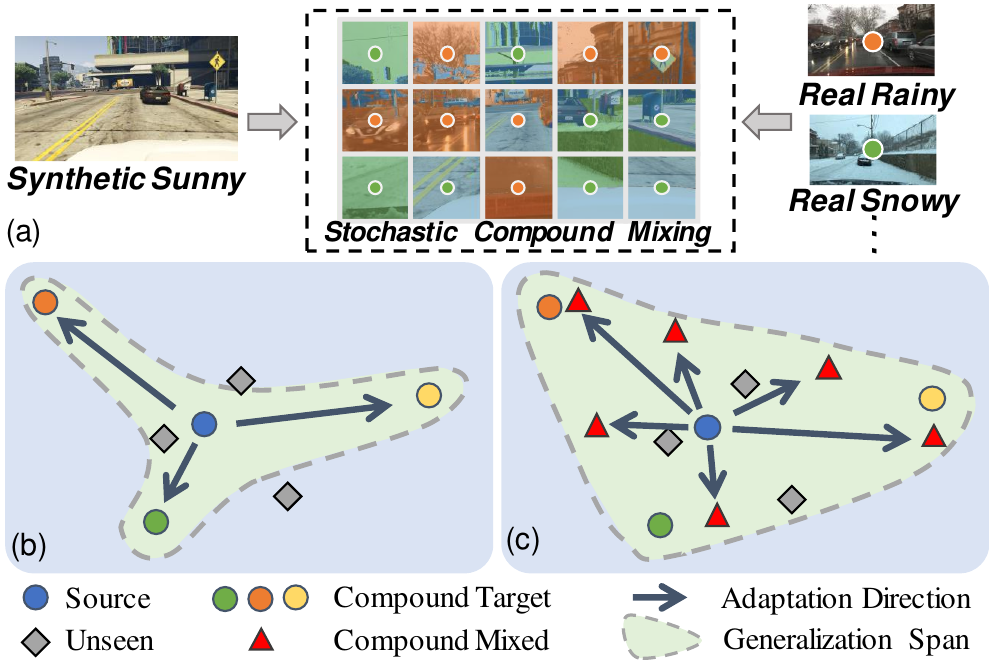}
\caption{
(a) The proposed Stochastic Compound Mixing (SCMix).
(b) Existing works adapt to each target domain iteratively.  
(c)  Our approach focuses on mixing compound domains to enhance the model's adaptation and generalization performance.
}
\label{fig:banner}
\end{figure}

Recent OCDA works mainly take the divide-and-conquer criterion~\cite{csfu,dha,mlbpm}, which first divides the target compound domain into multiple subdomains, and then adapts to each domain respectively. This process equals breaking the complex problem into multiple easier single-target DA problems. To separately align with different target domains, they often contain many sub-networks or stages, and the learning process of the model is equivalent to the ensemble of multiple subdomain models, which may result in trivial solutions in practice and limit their generalization performance (see Fig.~\ref{fig:banner}(b)).

To address the above-mentioned issues, there is a need to investigate how the non-single mixed target domain would affect the overall generalization performance and redefine the adaptation risk. Classical domain adaptation bound analysis~\cite{ben2010theory} considers three factors in upper bounding the target risk: source risk, distribution discrepancy, and the minimal combined risk. Inspired by the analysis, when adapting to the OCDA setting where the target domain is a mixture of multiple subdomains, we derive a new bound showing that the distribution discrepancy should be calculated as \textit{the sum of  discrepancies between the source domain and any possible joint target subdomains}. Through this theory, previous divide-and-conquer methods have simplified this computation by  considering the discrepancy between the source domain and each target subdomain, which ignores the true intra-domain discrepancy and deviates from the definition of OCDA. This simplification may also hinder the model's transfer performance on the target domains.

Based on this finding, we consider that the key challenge lies in leveraging the differences and correlations among compound target domains so as to minimize the actual adaptation gap. To achieve this, we propose \textbf{Stochastic Compound Mixing (SCMix)}, an advanced mixing solution to reduce the divergence between source and mixed target distributions globally (see Fig.~\ref{fig:banner}(c)).
Our strategy stochastically mixes the source images and multiple target domains with a dynamic grid mask, in which we further perform the class-mix process.
We theoretically analyze the advantages of our proposed strategy over the existing methods.
A key contribution of our work is showing that 
SCMix can be considered a generalized extension of the previous single-target mixing, i.e., they are actually a sub-group of our proposed framework according to the group theory framework. This certifies that SCMix yields lower empirical risk.  We conduct extensive experiments to support its effectiveness.
Additionally, considering the robustness of Transformers over CNNs, we first introduce and reveal the significant potential of the transformer backbone in the OCDA setting. 
Built upon this,  our approach outperforms existing state-of-the-art methods by a large margin in both domain adaptation and domain generalization scenarios. 
Our contributions can be summarized as follows:
\begin{itemize}[itemsep=1pt,topsep=1pt,parsep=1pt,leftmargin=12pt]
\item We define the domain adaptation bound for OCDA, which highlights the limitation of the existing methods and illustrates the key to solving this problem. (Sec. 3.1)
\item We introduce a simple yet effective mixing strategy, Stochastic Compound Mixing (SCMix), for OCDA setting, which dynamically mixes the source and multi-target images for global optimal adaptation. (Sec. 3.3)
\item We demonstrate that SCMix generally outperforms existing OCDA methods theoretically. A lower error bound can be achieved with our proposed strategy. (Sec. 3.4) 
\item We conduct large experiments to validate the effectiveness of SCMix, which gives a significant empirical edge on both compound and open domains. (Sec. 4.2 \& 4.3) 

\end{itemize}

\section{Related Work}
\noindent\textbf{Unsupervised Domain Adaptation (UDA)}  techniques aim to overcome the challenges associated with labeling large-scale datasets, which can be cumbersome and expensive. 
Adversarial learning~\cite{adaptsegnet,hoffman2018cycada,fda} is 
a popular strategy in UDA, where the model 
aligns the distribution between the source and target domains at either the image-level or feature-level, thereby implicitly measuring distribution shift and learning domain-invariant representations.
In addition, self-training approaches~\cite{dacs,proda,corda,metacorrection} showed
promising performance 
by utilizing the model trained on labeled source domain data to generate pseudo labels for the target domain data, and iteratively training the model with these labels.
Existing approaches are limited in practical usage because they assume the target data comes from a single distribution. More generalized techniques are needed to handle the challenge of multiple diverse distributions in real-world scenarios.

\noindent\textbf{Open Compound Domain Adaptation (OCDA)} assumes the target is a compound of multiple homogeneous domains without domain labels rather than a predominantly uni-modal distribution. CDAS~\cite{cdas} first proposed the concept of OCDA, introducing a curriculum domain adaptation strategy to learn from source-similar samples. 
AST~\cite{ast} introduced a cross domain feature stylization and a content-preserving normalization to learn from domain-invariant features. Most current OCDA works~\cite{csfu,mlbpm,dha} solved the intra-domain gaps by separating the compound target domain into multiple subdomains. CSFU~\cite{csfu} adopted domain-specific batch normalization for adaptation. DHA~\cite{dha} 
used GAN-based image translation and adversarial training to extract domain-invariant features from subdomains. ML-BPM~\cite{mlbpm} presented a multi-teacher distillation strategy to learn from multi-subdomains efficiently. Different from previous works that take a divide-and-conquer strategy to reduce the domain gap, we argue that taking advantage of the intra-domain variance will minimize the gap between the source and multiple compound target domains, thus benefiting the domain adaptation and generalization performance.


\noindent\textbf{Mixing} pixels from two training images to create highly perturbed samples has been proven successful in UDA for semantic segmentation~\cite{dacs,daformer,camix}. Mixing technology improves the domain adaptation performance by generating pseudo labels with weak augmented images and training the model with strong augmented images through consistency regularization. For instance, CutMix~\cite{cutmix} cut a rectangular region from one image and pasted it on top of another. ClassMix~\cite{classmix} further developed this line by creating the selection dynamically with the assistance of the ground truth mask, which was later introduced to perform cross-domain mixing~\cite{dacs}. Nonetheless, previous works have concentrated on mixing one single target with one source image, which restricts their efficacy in OCDA tasks. Conversely, we contend that utilizing multi-target mixing could be an effective and straightforward method to further enhance performance. 

\newtheorem{assumption}{Assumption}
\newtheorem{proposition}{Proposition}
\newtheorem{theorem}{Theorem}
\newtheorem{proof}{Proof}
\newtheorem{definition}{Definition}

\section{Method}

\subsection{Theoretical Motivation}

This section establishes the generalization bound for OCDA and provides the theoretical motivation for this paper.

Let the data and label spaces be represented by $\mathcal{X}$ and $\mathcal{Y}$ respectively, and $h:\mathcal{X}\rightarrow\mathcal{Y}$ be a mapping such that $h\in \mathcal{H}$ is a set of candidate hypothesis.
Under OCDA settings, the target domain comprises multiple known and unknown homogeneous subdomains. Its learning bound should consider the relationships between target subdomains. Inspired by the theory of multi-source DA risk bound in~\cite{ben2010theory}, we propose to calculate the $\mathcal{H}\Delta\mathcal{H}$-distance between the source domain and the combination of target subdomains and attain  the OCDA Learning Bound:
\begin{theorem}
\textbf{\rm{(OCDA Learning Bound)}} Let $R^\mathcal{S}$, $R^\mathcal{T}$ be the generalization error on the source domain $\mathcal{D}^\mathcal{S}$ and the target domain $\mathcal{D}^\mathcal{T}$, respectively.  $\mathcal{D}^\mathcal{T}$ 
contains $N$ seen subdomains,such that $\{\mathcal{D}^\mathcal{T}\}_1^N=\{\mathcal{D}^\mathcal{T}_1,\dots,\mathcal{D}^\mathcal{T}_N\}$, and $K-N$ unseen subdomains, $N\ll K$. Given the risk of a hypothesis $h \in \mathcal{H}$, the overall target risk is bounded by:
\begin{equation}
\begin{split}
R^\mathcal{T}(h) \leq R^\mathcal{S}(h) +  \sum\nolimits_{i} \sum\nolimits_{j \geq i}d_{\mathcal{H}\Delta\mathcal{H}}(\mathcal{D}^\mathcal{S},\mathcal{J}_{ij}) + \lambda ,
\end{split}
\end{equation}
where $1 \leq i \leq j \leq  N$, and $\mathcal{J}_{i,j}=\dtn_i \otimes   \dots \otimes \dtn_j$ denotes the joint distribution (joint subdomains) by using the jointing operation $\otimes$.
$d_{\mathcal{H}\Delta\mathcal{H}}$ is the $\mathcal{H}\Delta\mathcal{H}$-distance between $\mathcal{D}^\mathcal{S}$ and $\mathcal{J}$,
\begin{equation}
\begin{split}
d_{\mathcal{H}\Delta\mathcal{H}} \triangleq \sup\limits_{h,h'\in\mathcal{H}}  |&\mathbb{E}_{x \in \mathcal{D}^\mathcal{S}}[h(x)\neq h'(x)] \\
 - &\mathbb{E}_{x \in \mathcal{J}}[h(x)\neq h'(x)]| ,
\end{split}
\end{equation}
and joint hypothesis 
 $\lambda:=\min_{h^*\in\mathcal{H}}(R^\mathcal{S}(h^*)+R^\mathcal{T}(h^*)) $ corresponds to the minimal total risk over all domains.
\end{theorem}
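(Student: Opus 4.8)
The plan is to derive the OCDA bound from the classical single-source, single-target inequality of Ben-David et al.\ and then redistribute the source-to-target discrepancy across all joint subdomains $\mathcal{J}_{ij}$. First I would invoke the base result: for every $h\in\mathcal{H}$,
\[
R^\mathcal{T}(h)\le R^\mathcal{S}(h)+d_{\mathcal{H}\Delta\mathcal{H}}(\mathcal{D}^\mathcal{S},\dtn)+\lambda,
\]
with $\lambda=\min_{h^*\in\mathcal{H}}(R^\mathcal{S}(h^*)+R^\mathcal{T}(h^*))$, which is exactly the combined-risk term in the statement (the usual factor $\tfrac12$ is absorbed by the paper's definition of $d_{\mathcal{H}\Delta\mathcal{H}}$ as a supremum without the leading $2$). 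This reduces the entire problem to upper-bounding the single discrepancy $d_{\mathcal{H}\Delta\mathcal{H}}(\mathcal{D}^\mathcal{S},\dtn)$ by the double sum over the joint subdomains.

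The core step is a decomposition lemma for the $\mathcal{H}\Delta\mathcal{H}$-distance. I would first record that $d_{\mathcal{H}\Delta\mathcal{H}}$ is a pseudo-metric: being a supremum over $h,h'$ of a difference of expectations of the indicator $\mathbb{E}_x[h(x)\neq h'(x)]$, it is nonnegative, symmetric, and satisfies the triangle inequality (the supremum of a sum is bounded by the sum of suprema). Writing the compound target through the jointing operation $\otimes$ as a composition of the joint subdomains $\mathcal{J}_{ij}$ over all $1\le i\le j\le N$, I would chain the triangle inequality through these intermediate distributions to obtain
\[
d_{\mathcal{H}\Delta\mathcal{H}}(\mathcal{D}^\mathcal{S},\dtn)\le\sum\nolimits_{i}\sum\nolimits_{j\ge i}d_{\mathcal{H}\Delta\mathcal{H}}(\mathcal{D}^\mathcal{S},\mathcal{J}_{ij}).
\]
Each $\mathcal{J}_{ij}$ encodes the correlations among subdomains $i$ through $j$, so enumerating every interval $(i,j)$ captures all admissible joint configurations of the target --- precisely the intra-domain variance that the divide-and-conquer simplification (which keeps only the $N$ singletons $\mathcal{J}_{ii}$) discards. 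Substituting this into the base inequality yields the claimed bound.

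The hard part will be making the jointing operation $\otimes$ precise enough that the decomposition is rigorous rather than heuristic, since two requirements must be reconciled at once: $\otimes$ must admit a mixture/coupling reading so that the composite of the $\mathcal{J}_{ij}$ genuinely recovers $\dtn$, and it must interact cleanly with the supremum defining $d_{\mathcal{H}\Delta\mathcal{H}}$ so that each link of the triangle-inequality chain stays valid. A careful treatment would fix $\otimes$ as a convex (or product) combination, use the fact that $\mathbb{E}_x[h(x)\neq h'(x)]$ is affine in the mixing weights, and then apply the triangle inequality term by term over the $\tfrac{N(N+1)}{2}$ intervals. Once $\otimes$ is pinned down the remaining manipulation is routine; the conceptual weight sits entirely in justifying that summing over all joint subdomains, rather than only the singletons, is what correctly upper-bounds the true source-to-compound discrepancy.
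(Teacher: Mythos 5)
Your high-level route is essentially the paper's own: the supplement does nothing more than recall the Ben-David single-pair UDA bound (with the usual factor $\tfrac12$ absorbed into the paper's definition of $d_{\mathcal{H}\Delta\mathcal{H}}$, as you correctly observe), state the multi-target extension $R^\mathcal{T}(h)\le R^\mathcal{S}(h)+\sum_{i}d_{\mathcal{H}\Delta\mathcal{H}}(\mathcal{D}^\mathcal{S},\mathcal{D}^\mathcal{T}_i)+\lambda$, and then assert the theorem by applying that decomposition with the enlarged family $\{\mathcal{J}_{ij}\}_{1\le i\le j\le N}$ substituted for the singleton subdomains. So your reduction to bounding the single discrepancy $d_{\mathcal{H}\Delta\mathcal{H}}(\mathcal{D}^\mathcal{S},\dtn)$ by the double sum is exactly where the paper's argument lives, and your write-up is, if anything, more explicit than the paper's.

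There is, however, one genuine flaw in your stated mechanism: ``chaining the triangle inequality through the intermediate distributions'' cannot yield $d_{\mathcal{H}\Delta\mathcal{H}}(\mathcal{D}^\mathcal{S},\dtn)\le\sum_{i}\sum_{j\ge i}d_{\mathcal{H}\Delta\mathcal{H}}(\mathcal{D}^\mathcal{S},\mathcal{J}_{ij})$. A pseudometric chain $\mathcal{D}^\mathcal{S}\to\mathcal{J}_{i_1 j_1}\to\cdots\to\dtn$ produces cross-terms $d_{\mathcal{H}\Delta\mathcal{H}}(\mathcal{J}_{i_k j_k},\mathcal{J}_{i_{k+1} j_{k+1}})$ between consecutive joints, not a sum of source-to-joint distances, and any attempt to reroute each link back through $\mathcal{D}^\mathcal{S}$ reintroduces the very quantity being bounded. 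The argument that actually works is the one you defer to your final paragraph and should promote to the core step: read the compound target as a mixture $\dtn=\sum_{i\le j}\alpha_{ij}\mathcal{J}_{ij}$ with $\alpha_{ij}\ge 0$ and $\sum_{i\le j}\alpha_{ij}=1$; then, since the disagreement expectation is affine in the mixing weights,
\begin{equation*}
d_{\mathcal{H}\Delta\mathcal{H}}(\mathcal{D}^\mathcal{S},\dtn)
=\sup_{h,h'\in\mathcal{H}}\Bigl|\sum_{i\le j}\alpha_{ij}\bigl(\mathbb{E}_{\mathcal{D}^\mathcal{S}}[h\neq h']-\mathbb{E}_{\mathcal{J}_{ij}}[h\neq h']\bigr)\Bigr|
\le\sum_{i\le j}\alpha_{ij}\,d_{\mathcal{H}\Delta\mathcal{H}}(\mathcal{D}^\mathcal{S},\mathcal{J}_{ij})
\le\sum_{i\le j}d_{\mathcal{H}\Delta\mathcal{H}}(\mathcal{D}^\mathcal{S},\mathcal{J}_{ij}),
\end{equation*}
using the scalar triangle inequality termwise and nonnegativity of each distance. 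One caveat applies equally to your proposal and to the paper: $\dtn$ contains $K-N$ unseen subdomains, while the sum runs only over joints of the $N$ seen ones, so the mixture representation tacitly assumes that unseen subdomains are covered by joint combinations of seen subdomains --- neither you nor the paper discharges this, and it should be stated as an assumption rather than treated as proved.
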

Proof of Theorem~1 is available in the supplement.

If we expand the middle term in the risk, we can derive:
\begin{equation*}
\underbrace{ \overbrace{\sum\nolimits_{i}d_{\mathcal{H}\Delta\mathcal{H}}(\mathcal{D}^\mathcal{S},\mathcal{D}^\mathcal{T}_i)}^{\text{Conventional OCDA risk objective}}+ \sum\nolimits_{i}\sum\nolimits_{j>i}d_{\mathcal{H}\Delta\mathcal{H}}(\mathcal{D}^\mathcal{S},\mathcal{J}_{ij})}_{\text{Our risk objective}} ,
\end{equation*}
where the former part is the objective of conventional OCDA methods (divide-and-conquer strategy) that aims to break down the complex OCDA problem into multiple easier single-target DA problems.
However, this failure to consider 
the disparity between the source and joint probability distributions of the targeted compound domain merely addresses a fraction of the total risk involved.
In contrast, inspired by the observation, we aim to reduce the overall risk by minimizing the gap between the source and the joint distribution of compound target domains. In the following sections, we will first describe the self-training framework and propose our novel augmentations to approximate our risk objective. Then, we shall theoretically establish the superiority of our approach by leveraging insights from group theory.

\subsection{Preliminaries}
Given the paired source domain images with one-hot labels $\{(x^\mathcal{S}_{n_s},y^\mathcal{S}_{n_s})\}^{N_s}_{{n_s}=1}$ with $C$ classes and the unlabeled target compound domain images $\{(x^\mathcal{T}_{n_t})\}^{N_t}_{{n_t}=1}$, we aim to train a segmentation network that achieves promising performance on the target domain. Directly training the network $f$ on source data with categorical cross-entropy (CE) loss cannot guarantee good performance on the target domain due to the domain gap: 
\begin{equation}
\mathcal{L}_{CE}^\mathcal{S}=  -\sum^{W \times H}_{i=1} \sum^{\mathcal{C}}_{c=1} y^{\mathcal{S}}_{(i,c)} \log (f(x^{\mathcal{S}})_{(i,c)}).
\end{equation}

To tackle the domain gap, one popular self-training (ST) solution~\cite{meanteacher} is to utilize a source-trained teacher network $f_{te}$ to generate target domain pseudo labels $\hat{y}^\mathcal{T}$ by the maximum probable class:
\begin{equation}
\begin{split}
\hat{y}^\mathcal{T}_{(i,c)}=\left\{\begin{array}{ll} 1, & {\rm if~~}c = \arg\max_{c'} f_{te} (x^\mathcal{T})_{(i,c')}\\0, & {\rm otherwise}\end{array}\right. .
\end{split}
\end{equation}
The teacher network $f_{te}$ is not updated by gradient backpropagation but the Exponentially Moving Average (EMA) of the student network weights $\theta_f$ after each training step $t$:
\begin{equation}
\theta_{f_{te}}^{t+1} \leftarrow m \theta_{f_{te}}^{t} + (1-m) \theta_f ,
\label{eq:ema}
\end{equation}
where $m$ is the momentum to temporally ensemble the student network. 
The student model is then trained with the strong augmented image and its label with weighted cross-entropy (WCE) loss:
\begin{equation}
\mathcal{L}_{WCE}^\mathcal{T}= -\sum^{W \times H}_{i=1} \sum^{\mathcal{C}}_{c=1} w_{(i)} y_{(i,c)} \log (f(Aug(x)_{(i,c)})),
\end{equation}
where the weight $w$ is the confidence of the pseudo label.

\subsection{Stochastic Compound Mixing}
To further stabilize the training process and minimize the domain gaps, we follow previous UDA works~\cite{dacs,daformer,corda} to generate pseudo labels on non-augmented images, and train the student network with domain-mixed images. Commonly, the process of mixing domains is carried out in a single-target manner. This involves selecting sets of pixels from one source image and pasting them onto a target image. Under OCDA, we argue that mixing one source image with multiple compound target images enhances the model's generalization, as per our theory, which yields advantages for  OCDA. In order to attain our objective, the domain-mixing strategy should meet the following criteria:
\begin{itemize}[itemsep=1pt,topsep=1pt,parsep=1pt,leftmargin=12pt]
\item  It should involve multiple compound target domain images to form multi-target mixing.
\item  It should preserve the local semantic consistency between the source and mixed target images.
\item  It should provide sufficient perturbation to improve the model's robustness against unseen factors.
\end{itemize}

To this end, we propose an augmentation strategy that mixes one source image with multiple target domain images. We stochastically sample multiple target images to conduct compound mixing to cover the possible permutation and combination of the mixture. We then perform class mixing between the source image and the compound-mixed target image in each grid to retain semantic consistency.


\begin{figure}[t]
\centering
\includegraphics[width=0.9\linewidth]{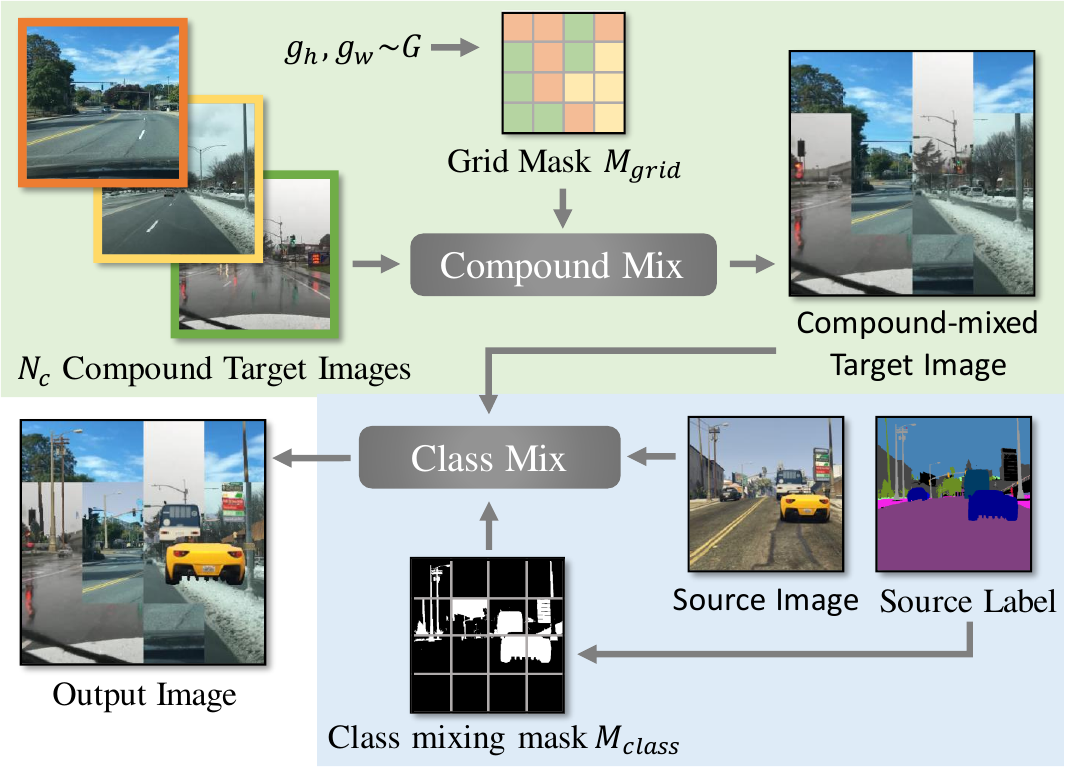}
\caption{Examples augmented using SCMix: an image from the source domain is
mixed with multiple images from the compound target domain. }%
\label{fig:scmix}
\end{figure}

Formally, in each iteration, one source image and $N_c$ compound target images with the corresponding ground truth and pseudo labels are sampled, denoted as $(x^\mathcal{S},y^\mathcal{S}, x^\mathcal{T}_1,\hat{y}^\mathcal{T}_1,\dots, x^\mathcal{T}_{N_c},\hat{y}^\mathcal{T}_{N_c})$. In addition, a confidence estimate $w$ is produced for the pseudo labels, where we use the ratio of pixels exceeding a threshold $\tau$ of the maximum softmax probability~\cite{dacs}:
\begin{equation}
w^\mathcal{T}_{k} = {\textstyle\sum_{n=1}^{W\times H} \llbracket\max_{c'} f_{te}(x^\mathcal{T}_k)_{,(n,c')} > \tau\rrbracket}/{WH} ,
\label{eq:weight}
\end{equation}
where $k\in\{1,...,N_c\}$ is the index of target images, $\llbracket\cdot\rrbracket$ denotes Iverson bracket.

We start by generating a grid mask $M_{grid} \in \mathbb{R}^{W \times H}$ to fuse target domain images. The mask is divided into grids with size $g_h \times g_v$, where $g_h$ and $g_v$ are the horizontal and vertical grid numbers randomly sampled from the candidate set $G$ respectively. For each grid cell $M_{grid (i)} \in \mathbb{R}^{\frac{W}{g_h} \times \frac{H}{g_v}}$, we randomly select a value from an integer set $[1,2,\dots,N_c]$ to represent the index of $N_c$ target samples. 
Then, the compound-mixed target image $x'$ and corresponding pseudo label $\hat{y}'$ as well as weight $w'$ can be fused by grid masking:
\begin{equation}
\begin{split}
\left\{\begin{array}{ll} 
x' = \sum_{k=1}^{N_c} x^\mathcal{T}_k  \odot \llbracket M_{grid}=k\rrbracket \\
\hat{y}' = \sum_{k=1}^{N_c} \hat{y}^\mathcal{T}_k  \odot \llbracket M_{grid}=k\rrbracket \\
w' = \sum_{k=1}^{N_c} w^\mathcal{T}_k  \odot \llbracket M_{grid}=k\rrbracket
\end{array}\right. ,
\end{split}
\label{eq:1}
\end{equation}
where $\odot$ denotes dot product.


Next, we fuse the compound-mixed target image and the source image. For each grid, a subset of classes~\cite{dacs} is randomly selected from $y^\mathcal{S}$ to form the binary class mixing mask $M_{class} \in \{0,1\}^{W\times H}$, where the pixels is 1 if belonging to the subset otherwise 0. Specifically, to ensure that the areas of the source and $N_c$ target images are balanced in the mixed images, we randomly select $\lceil \frac{c_s}{N_c} \rceil$ classes from the ground truth label of the source images, where $c_s$ is the number of classes in the label.
The final mixed image with its label and weight is defined as:
\begin{equation}
\begin{split}
\left\{\begin{array}{ll} 
x^{mix}= x^\mathcal{S} \odot M_{class} + x'  \odot (1-M_{class}) \\
y^{mix}= y^\mathcal{S} \odot M_{class} + \hat{y}'  \odot (1-M_{class})\\
w^{mix}= \mathds{1}\odot M_{class} + w'  \odot (1-M_{class})
\end{array}\right. ,
\end{split}
\label{eq:2}
\end{equation}
where $\mathds{1} \in \mathbb{R}^{W \times H}$ is an all-one weight map for source domain. Finally, the weighted cross-entropy (WCE) loss to train the student model can be rewritten as:
\begin{equation}
\mathcal{L}_{WCE}^\mathcal{T}= -\sum^{W \times H}_{i=1} \sum^{\mathcal{C}}_{c=1} w^{mix}_{(i)} y^{mix}_{(i,c)} \log (f(x^{mix}_{(i,c)})).
\end{equation}

\begin{algorithm}[tb]
\caption{SCMix Algorithm}
\label{alg:algorithm} 
\begin{algorithmic}[1] 
\REQUIRE Source domain and compound target domain datasets $\mathcal{D}^\mathcal{S}$ and $\mathcal{D}^\mathcal{T}$, student network $f$, teacher network $f_{te}$, the number of mixed target images $N_c$, and candidate set for grid size $G$.
\STATE Initialize network parameters $\theta_{f}$ and $\theta_{f_{te}}$.
\FORALL{$t = 1$ to $T$}
\STATE $x^\mathcal{S},y^\mathcal{S} \sim \mathcal{D}^\mathcal{S}$ 
\STATE $x^\mathcal{T}_1,\dots, x^\mathcal{T}_{N_c} \sim \mathcal{D}^\mathcal{T}$ 
\FORALL{$k = 1$ to $N_c$}
\STATE $\hat{y}^\mathcal{T}_k \leftarrow \arg\max( f_{te}(x^\mathcal{T}_{k})$ )
\STATE $w^\mathcal{T}_k \leftarrow Threshold(f_{te}(x^\mathcal{T}_{k}), \tau)$ \hfill $\triangleright$ See Eq.~\ref{eq:weight}
\ENDFOR
\STATE Generate $M_{grid}$ with $N_c$ and $g_h,g_v\sim G$  
\STATE $x^{mix},y^{mix},w^{mix} \leftarrow$ Augmentation by mixing one source image with $N_c$ target images, along with their labels and weights. \hfill $\triangleright$ See Eq.~\ref{eq:1}, Eq.~\ref{eq:2}
\STATE Compute the overall losses.  \hfill $\triangleright$ See Eq.\ref{eq:all}
\STATE Update student network $\theta_{f}$ with the gradient.
\STATE Update teacher network $\theta_{f_{te}}$ with the EMA in Eq.~\ref{eq:ema}.
\ENDFOR
\STATE \textbf{return} $f$
\end{algorithmic}
\end{algorithm}

In summary, both source images and mixed images are used to train the network with the overall objective:
\begin{equation}
\mathcal{L}=\mathcal{L}_{CE}^\mathcal{S}+\mathcal{L}_{WCE}^\mathcal{T}    .
\label{eq:all}
\end{equation}

The SCMix augmented images is illustrated in Fig.~\ref{fig:scmix}, and the complete training process of SCMix is depicted using pseudo code in Algorithm \ref{alg:algorithm} for better understanding.

\subsection{Theoretical Support for SCMix}
We explicate the mechanism of the Stochastic Compound Mixing strategy grounded in group theory. 
we introduce a grand sample space $\chi $, comprised of a source space $\chi^\mathcal{S}$  and $N$ target spaces  $\chi^\mathcal{T}_i$, where $i \in\{1,...,N\}$. The observations $X^\mathcal{S} \in \chi ^\mathcal{S}, X^\mathcal{T}_i \in \chi^\mathcal{T}_i $ are  i.i.d. sampled from a probability distribution $\mathbb{P}^\mathcal{S}, \mathbb{P}^\mathcal{T}_i$, respectively. $\{X^\mathcal{T}\}_1^N = \{X^\mathcal{T}_1, ..., X^\mathcal{T}_N\}$ is explicitly specified to avoid any ambiguity.  
As stated in~\cite{chen2020group},  we adhere to the following assumption:
\begin{assumption}
\label{ass:invariance}
    Source and target data exhibit \textit{exact invariant} to a certain group of transforms $G$ that acts on the sample space, \textit{i.e.}, function $\phi: G \times \chi \to  \chi$, $(g,X) \mapsto \phi (g,X)$, such that $g (X):= \phi (e, X) = X$ for identity element $e \in G$.
    Thus for $g \in G$, $X$ transformed by $g$ has an equality in distribution of itself:
\begin{equation}
\label{eq:eq}
    X =_d g(X).
\end{equation}
\end{assumption}
Specifically, a sub-group of transform $g\in G$ refers to a specific mixing method among all mixing methods.
\begin{proposition}
\label{prop:g_useful}
   For any feasible $g\in G$, there exists a tighter upper bound for performance that exceeds that of bare OCDA models.
\end{proposition}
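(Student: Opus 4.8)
The plan is to argue, via the group-theoretic variance-reduction principle of~\cite{chen2020group}, that applying any feasible mixing transform $g\in G$ to the self-training estimator can only tighten the target-risk bound of Theorem~1. First I would fix a hypothesis $h\in\mathcal{H}$ and write the empirical estimator of its risk as a functional $\hat{R}(h;X)$ of the training observations $X=(X^\mathcal{S},\{X^\mathcal{T}\}_1^N)$. The \emph{bare} OCDA model corresponds to evaluating this functional on the raw samples, whereas a mixing method is identified with a subgroup element $g$ and produces the transformed estimator $\hat{R}(h;g(X))$. By Assumption~\ref{ass:invariance}, $X=_d g(X)$, so the population quantity being estimated is left unchanged by $g$; consequently the mixed estimator remains unbiased in distribution, and both the bare and the mixed estimator are consistent for the same target risk $R^\mathcal{T}(h)$.

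Next I would form the orbit average over the subgroup $\langle g\rangle$, namely $\bar{R}(h;X)=\mathbb{E}_{g}\!\left[\hat{R}(h;g(X))\right]$, which is precisely the Rao--Blackwellization of the estimator with respect to the $G$-invariant $\sigma$-algebra. Invariance guarantees that $\bar R$ targets the same quantity, and the law of total variance (equivalently, conditional Jensen applied to the convex variance functional) yields $\operatorname{Var}\!\big(\bar R\big)\le\operatorname{Var}\!\big(\hat R\big)$, with equality only when $\hat R$ is already $g$-invariant. For a nontrivial mixing $g$ the bare estimator is not invariant, so the inequality is strict. This is the step that makes compound mixing admissible: stochastically mixing the source with several target subdomains realizes a richer subgroup than single-target mixing, hence a finer conditioning $\sigma$-algebra and a stronger variance contraction.

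I would then transport the variance contraction into the bound of Theorem~1. Since the slack width of any concentration-type generalization bound grows monotonically with the variance of the empirical risk functional, a strictly smaller $\operatorname{Var}(\bar R)$ produces a strictly smaller confidence term, i.e.\ a tighter upper bound on $R^\mathcal{T}(h)$ than the one attained by the bare OCDA model. Concretely, because $\bar R$ is evaluated on samples drawn from the joint mixed distribution rather than the marginals, its controlling discrepancy is the cross term $d_{\mathcal{H}\Delta\mathcal{H}}(\mathcal{D}^\mathcal{S},\mathcal{J}_{ij})$ that the conventional objective omits; mixing therefore shrinks exactly the summands that Theorem~1 identifies as neglected.

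The hard part will be establishing the strictness of the inequality rigorously, which requires showing that a generic feasible mixing is not a stabilizer of the OCDA estimator, together with verifying that the \emph{exact}-invariance hypothesis of Assumption~\ref{ass:invariance} is consistent with the label-carrying mixing of Eq.~\ref{eq:2} (so that pseudo-labels transform compatibly and $\bar R$ stays unbiased). A secondary technical point is to argue that the subgroup generated by compound mixing strictly contains the single-target subgroup, which is what upgrades the generic $\le$ into the ``tighter than bare OCDA'' claim of the proposition.
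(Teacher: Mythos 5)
Your proposal is correct at its core and rests on exactly the same key lemma as the paper: under Assumption~\ref{ass:invariance} (exact invariance), the orbit-averaged estimator $\hat{\theta}_{g}(X)=\mathbb{E}_{g}\hat{\theta}(gX)$ satisfies $\mathbb{E} L(\theta_0,\hat{\theta}(X))\ge \mathbb{E} L(\theta_0,\hat{\theta}_{g}(X))$ for any convex loss, citing \cite{chen2020group}. The paper's entire proof is that single displayed inequality, applied to a \emph{generic} estimator $\hat{\theta}$ of $\theta_0$, followed by the remark that this ``leads to a tight bound for models with $g$'' --- nothing more. Where you diverge is in the instantiation and the bridge: you specialize the estimator to the empirical risk functional $\hat{R}(h;X)$, derive variance contraction via conditional Jensen (which is the squared-error special case of the same lemma, i.e.\ Rao--Blackwellization with respect to the $G$-invariant $\sigma$-algebra), and then argue that reduced variance shrinks the confidence term of a concentration-type bound and the $d_{\mathcal{H}\Delta\mathcal{H}}(\mathcal{D}^\mathcal{S},\mathcal{J}_{ij})$ summands of Theorem~1. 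That last bridge is not in the paper and is heuristic as stated --- the discrepancy terms in Theorem~1 are population quantities that the variance of a risk \emph{estimator} does not directly control, and ``slack grows monotonically with variance'' is not a theorem for $\mathcal{H}\Delta\mathcal{H}$-style bounds --- but the paper is equally loose at precisely this junction, so you are not missing anything the paper actually supplies. Similarly, the two issues you flag as the hard parts (strictness of the inequality, and compatibility of exact invariance with the label-carrying mixing of Eq.~\ref{eq:2}) are simply never addressed in the paper: its proof establishes only the non-strict $\ge$ and takes Assumption~\ref{ass:invariance} for granted. One scoping note: your subgroup-containment discussion ($g_o$ strictly inside $g_x$) belongs to Propositions~\ref{prop:h_ge_g} and~\ref{prop:h_e_g}, not here --- Proposition~\ref{prop:g_useful} only claims the benefit of \emph{any} feasible $g$ over the bare model, so your proof could be trimmed to the orbit-averaging step alone and would then coincide with the paper's argument in a more carefully spelled-out form.
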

\begin{proof} 
    Assuming exact invariance holds, let us consider an estimator $\hat{\theta}(X), X \in \chi $ of $\theta_0$ and its $g$ augmented versions $ \hat{\theta}_{g}(X) = \mathbb{E}_{g}\hat{\theta}(gX)$.
    Based on the findings of~\cite{chen2020group}, for any convex loss function $L(\theta_0, \cdot)$, we have:
    \begin{equation}
    \label{eq:eg}
        \mathbb{E} L(\theta_0, \hat{\theta}(X)) \ge \mathbb{E} L(\theta_0, \hat{\theta}_{g}(X)),
    \end{equation}
   which leads to a tight bound for models with $g$.
    \hfill $\Box$
\end{proof}
Proposition~\ref{prop:g_useful} shows that any proper augmentation can potentially enhance the performance of OCDA tasks. 

\begin{table*}[t]
\centering
\footnotesize
\setlength\tabcolsep{1.4 pt}
\begin{tabular}{l|ccccccccccccccccccc|c}
\toprule
\multicolumn{21}{c}{\textbf{GTA5 $\rightarrow$ C-Driving}}               \\\midrule
Method  & Road & SW. & Build & Wall & Fence & Pole & Light & Sign & Veg. & Terrain & Sky & Person & Rider & Car & Truck & Bus & Train & Motor. & Bike & \textbf{mIoU}\\\hline 
No Adaptation &73.4 & 12.5 & 62.8 & 6.0 & 15.8 & 19.4 & 10.9 & 21.1 & 54.6 & 13.9 & 76.7 & 34.5 & 12.4 & 68.1 & 31.0 & 12.8 & 0.0 & 10.1 & 1.9 & 28.3 \\
CDAS &79.1 & 9.4 & 67.2 & 12.3 & 15.0 & 20.1 & 14.8 & 23.8 & 65.0 & 22.9 & 82.6 & 40.4 & 7.2 & 73.0 & 27.1 & 18.3 & 0.0 & 16.1 & 1.5 & 31.4 \\
CSFU &80.1 & 12.2 & 70.8 & 9.4 & 24.5 & 22.8 & 19.1 & 30.3 & 68.5 & 28.9 & 82.7 & 47.0 & 16.4 & 79.9 & 36.6 & 18.8 & 0.0 & 13.5 & 1.4 & 34.9 \\
DHA &79.9 & 14.5 & 71.4 & 13.1 & 32.0 & 27.1 & 20.7 & 35.3 & 70.5 & 27.5 & 86.4 & 47.3 & 23.3 & 77.6 & 44.0 & 18.0 & 0.1 & 13.7 & 2.5 & 37.1 \\
ML-BPM &85.3 & 26.2 & 72.8 & 10.6 & 33.1 & 26.9 & 24.6 & 39.4 & 70.8 & 32.5 & 87.9 & 47.6 & 29.2 & \textbf{84.8} & 46.0 & 22.8 & \textbf{0.2} & 16.7 & \textbf{5.8 }& 40.2 \\
SCMix (ours)& \textbf{86.4} & \textbf{35.7} & \textbf{76.7} & \textbf{30.6} &\textbf{ 36.4} &\textbf{ 31.7 }& \textbf{27.2} & \textbf{40.8 }& \textbf{75.5} & \textbf{32.8} & \textbf{90.4} & \textbf{49.9} & \textbf{40.8} & 81.5 & \textbf{57.8} & \textbf{42.4} & 0.0 & \textbf{36.9} & 2.0 & \textbf{46.1}\\
\bottomrule
\end{tabular}
\caption{Comparison of GTA5 $\rightarrow$ C-Driving adaptation in terms of mIoU(\%). The best result is highlighted in \textbf{bold}.}
\label{tab:gta}
\end{table*}
\begin{table*}[t]
\centering
\footnotesize
\setlength\tabcolsep{2 pt}
\begin{tabular}{l|cccccccccccccccc|c|c}
\toprule
\multicolumn{19}{c}{\textbf{SYNTHIA $\rightarrow$ C-Driving}}               \\\midrule
Method  & Road & SW. & Build & Wall & Fence & Pole & Light & Sign$^*$ & Veg. & Sky & Person & Rider$^*$ & Car & Bus$^*$ & Motor.$^*$ & Bike$^*$ & \textbf{mIoU}& \textbf{mIoU$^*$} \\\hline 
No Aadaptation  &  33.9 & 11.9 & 42.5 & 1.5 & 0.0 & 14.7 & 0.0 & 1.3 & 56.8 & 76.5 & 13.3 & 7.4 & 57.8 & 12.5 & 2.1 & 1.6 & 20.9 & 28.1 \\
CDAS &  54.5 & 13.0 & 53.9 & 0.8 & 0.0 & 18.2 & 13.0 & 13.2 & 60.0 & 78.9 & 17.6 & 3.1 & 64.2 & 12.2 & 2.1 & 1.5 & 25.4 & 34.0 \\
CSFU &  69.6 & 12.2 & 50.9 & 1.3 & 0.0 & 16.7 & 12.1 & 13.6 & 56.2 & 75.8 & 20.0 & 4.8 & 68.2 & 14.1 & 0.9 & 1.2 & 26.1 & 34.8 \\
DHA   &  67.5 & 2.5 & 54.6 & 0.2 & 0.0 & 25.8 & 13.4 & 27.1 & 58.0 & 83.9 & 36.0 & 6.1 & 71.6 & 28.9 & 2.2 & 1.8 & 30.0 & 37.6 \\
ML-BPM&  73.4 & 15.2 & 57.1 & 1.8 & 0.0 & 23.2 & 13.5 & 23.9 & 59.9 & 83.3 & \textbf{40.3} & \textbf{22.3} & 72.2 & 23.3 & 2.3 & 2.2 & 32.1 & 40.0 \\
SCMix (ours) &   \textbf{85.1} & \textbf{34.5} &\textbf{ 74.6} & \textbf{8.4} & \textbf{0.2} &\textbf{ 28.7 }& \textbf{25.1} & \textbf{38.2} & \textbf{72.7} &\textbf{ 90.3} & 18.7 & 5.1 & \textbf{75.0} & \textbf{47.3} & \textbf{10.0} & \textbf{3.9 }& \textbf{38.6 }& \textbf{46.7}\\
\bottomrule
\end{tabular}
\caption{Comparison of SYNTHIA $\rightarrow$ C-Driving adaptation in terms of mIOU(\%). The best result is highlighted in \textbf{bold}. mIOU$^*$ denotes the average of 11 classes (computed without the classes marked with $^*$).}
\label{tab:synthia}
\end{table*}
\begin{table*}[t]
\centering
\footnotesize
\setlength\tabcolsep{2.5 pt}
\begin{tabular}{l|c|ccccc|ccccc}
\toprule
\multirow{2}{*}{Method} & \multirow{2}{*}{Type} & \multicolumn{5}{c|}{GTA5 $\rightarrow$ OpenSet} & \multicolumn{5}{c}{SYNTHIA $\rightarrow$ OpenSet} \\ \cline{3-12} 
 &  & Open & Cityscape & KITTI & WildDash & Avg & Open & Cityscape & KITTI & WildDash & Avg \\ \hline
CDAS & OCDA & 38.9 & 38.6 & 37.9 & 29.1 & 36.1 & 36.2 & 34.9 & 32.4 & 27.6 & 32.8 \\
RobustNet& DG & 38.1 & 38.3 & 40.5 & 30.8 & 36.9 & 37.1 & 38.3 & 40.1 & 29.6 & 36.3 \\
DHA & OCDA & 39.4 & 38.8 & 40.1 & 30.9 & 37.3 & 37.1 & 38.3 & 40.1 & 29.6 & 36.3 \\
SHADE & DG & 38.7 &  46.7 & \textbf{48.3} & 24.7 & 39.6 & - &  -& - &- & - \\
ML-BPM& OCDA & 42.5 & 41.7 & 44.3 & 34.6 & 40.8 & 38.9 & 38.0 & 40.6 & 30.0 & 36.9 \\
SCMix (ours) & OCDA & \textbf{50.7} & \textbf{52.2} & 48.0 & \textbf{49.3 }& \textbf{50.1} & \textbf{50.1} & \textbf{59.4} &\textbf{50.8} &\textbf{ 35.5} & \textbf{49.0} \\ \bottomrule
\end{tabular}
\caption{Comparison of generalization performance on the open set in terms of mIOU(\%). The best is highlighted in \textbf{bold}.}
\label{tab:openset}
\end{table*}

We conducted a theoretical study to assess the superiority of SCMix over single-target mixing strategies.
Consider a group of mixing methods $g_{o}$ which mix source domains with a single target domain and another group $g_{x}$ which uses the SCMix approach, Eq.~\ref{eq:eq} also holds on $X \in \chi^\mathcal{S}, X \in \chi^\mathcal{T}_i$.
Based on Assumption~\ref{ass:invariance}, we have Proposition~\ref{prop:h_ge_g}:
\begin{proposition}
\label{prop:h_ge_g}
   Single-target mixing  ($g_{o} \in G$) is a sub-group of the SCMix ($g_{x} \in G$), i.e., $g_{o} \in g_{x}$. Additionally, the sample space of $g_o$ is a subspace of $g_x$: $\chi_{g_o}$ $\in$ $\chi_{g_x}$.
\end{proposition}
\begin{proof} 
Assuming exact invariance holds, we can consider the operation of $g_o$ as:
\begin{align}
    g_o ( {X}^\mathcal{S} ,  {X}^\mathcal{T}_i) = ({X}^\mathcal{S},{X}^\mathcal{T}_i, {X}^\mathcal{S} \oplus {X}^\mathcal{T}_i), \notag
\end{align}
where $\oplus$ denotes the augmented filled sample spaces in between. 
Thus, for all target domains, we can write: 
\begin{equation} 
\small
\label{eq:x_n}
    g_o ({X}^\mathcal{S} , \{X^\mathcal{T}\}_1^N) = ({X}^\mathcal{S} ,\{X^\mathcal{T}\}_1^N, {X}^\mathcal{S} \oplus \{X^\mathcal{T}\}_1^N ). 
\end{equation}

For $g_x$ operation, we have:
\begin{equation}
\small
\label{eq:Cx_n}
g_x ({X}^\mathcal{S} , \{X^\mathcal{T}\}_1^N )  = ({X}^\mathcal{T} ,\{X^\mathcal{T}\}_1^N,{X}^\mathcal{S} \oplus {C}(\{X^\mathcal{T}\}_1^N)), 
\end{equation}
where ${C}$ donates compounding target sample spaces and ${C}(X^\mathcal{T}_i) =\{ X^\mathcal{T}_i,  \{{X}^\mathcal{T}_i |_\oplus \}_i^N \}$. Thus for $\{X^\mathcal{T}\}_1^N$:
\begin{equation}
\small
\begin{split}
    {C}(\{X^\mathcal{T}\}_1^N)) \! =\!  \sum_{i =1}^N \! {C}(\! X^\mathcal{T}_i\! ) \!  = \! \{ \! \{ \! X^\mathcal{T}\! \}_1^N, \! \underbrace{...}_{\frac{n^2 -N}{2} \text{ terms omitted.}}\!   \}   \! \ni\!  \{\! X^\mathcal{T\! }\}_1^N\!  . \notag 
\end{split}
\end{equation}
See omitted terms in Supplementary Materials.
Since $ g_o ({X}^\mathcal{S} , \{X^\mathcal{T}\}_1^N ) \in g_x ({X}^\mathcal{S} , \{X^\mathcal{T}\}_1^N )$, we can infer that $g_o$ is a sub-group of $g_x$ and $\chi _{g_o} \in  \chi _{g_x}$. 
\hfill $\Box$
\end{proof}


Proposition~\ref{prop:h_ge_g} suggests that the proposed SCMix can be seen as an extension of the single-target mixing strategies.  As $N$ increases, the proportion of $g_o$ w.r.t. 
$g_x$ decreases. 
This signifies that $g_x$ will have a more significant impact when $N$ increases. Lastly, we discuss the impact of $N$.
\begin{proposition} 
\label{prop:h_e_g}
Given $N=1$, $g_x$ reduces to $g_o$. For $N\geq2$,  a tight upper bound exists for performance when augmented with $g_x$ rather than $g_o$.
\end{proposition}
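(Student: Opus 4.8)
The plan is to handle the two regimes in the statement separately, since it couples a degenerate case ($N=1$) with the substantive inequality ($N\ge2$). For $N=1$ I would argue directly from the definitions of the two operations in~\eqref{eq:x_n} and~\eqref{eq:Cx_n}; for $N\ge2$ I would chain Proposition~\ref{prop:h_ge_g}, which supplies the subgroup containment $g_o\in g_x$ and the sample-space inclusion $\chi_{g_o}\in\chi_{g_x}$, together with the convex-loss averaging inequality~\eqref{eq:eg} already used for Proposition~\ref{prop:g_useful}. The organizing principle is that averaging an estimator over a \emph{larger} invariance group can only shrink the expected convex loss, so the strictly larger $g_x$ must yield a bound at least as tight as $g_o$.

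First I would dispatch $N=1$. Specializing the compounding operator to a single target gives $C(\{X^\mathcal{T}\}_1^1)=C(X^\mathcal{T}_1)=\{X^\mathcal{T}_1\}$, because there is no second target to compound and the count $\tfrac{n^2-N}{2}$ of extra mixtures vanishes at $N=1$. Substituting into the right-hand side of~\eqref{eq:Cx_n} reproduces exactly $(X^\mathcal{S},X^\mathcal{T}_1,X^\mathcal{S}\oplus X^\mathcal{T}_1)$, which is the output of $g_o$ in~\eqref{eq:x_n}. Hence $g_x$ and $g_o$ coincide as transforms when $N=1$, and no inequality is needed.

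For $N\ge2$ I would introduce the group-averaged estimators $\hat\theta_{g_o}(X)=\mathbb{E}_{g_o}\hat\theta(g_oX)$ and $\hat\theta_{g_x}(X)=\mathbb{E}_{g_x}\hat\theta(g_xX)$. Since $g_o$ is a subgroup of $g_x$, the average over $g_x$ factors through the cosets as an iterated expectation $\mathbb{E}_{g_x}=\mathbb{E}_{g_x/g_o}\mathbb{E}_{g_o}$, so that $\hat\theta_{g_x}$ is a $(g_x/g_o)$-average of $\hat\theta_{g_o}$. Applying~\eqref{eq:eg} to this outer averaging and using exact invariance (Assumption~\ref{ass:invariance}) to replace $g_xX$ by $X$ in distribution yields $\mathbb{E}\,L(\theta_0,\hat\theta_{g_o}(X))\ge\mathbb{E}\,L(\theta_0,\hat\theta_{g_x}(X))$, i.e. $g_x$ attains the tighter upper bound. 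The strictness for $N\ge2$ I would read off the strict inclusion $\chi_{g_o}\subsetneq\chi_{g_x}$: when $N\ge2$ the compounded space $C(\{X^\mathcal{T}\}_1^N)$ contains the genuinely new mixtures counted by $\tfrac{n^2-N}{2}$, so the coset average is nondegenerate and the bound is tight rather than a plain equality.

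The part I expect to be the main obstacle is not the Jensen step but justifying the factorization $\mathbb{E}_{g_x}=\mathbb{E}_{g_x/g_o}\mathbb{E}_{g_o}$ together with its strictness. This needs the averaging measure on $g_x$ to restrict consistently to $g_o$ so that the tower property applies, and the residual mass on $g_x\setminus g_o$ to be strictly positive exactly when $N\ge2$; the former is furnished by the group structure and exact invariance, while the latter follows from the strict, nonempty inclusion of the additional compound mixtures. I would therefore phrase the $N\ge2$ conclusion as holding under these invariance conditions, inheriting the measure-level bookkeeping from~\cite{chen2020group} rather than re-deriving it.
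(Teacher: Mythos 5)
Your proposal is correct and takes essentially the same route as the paper: the same case split (showing $g_x$ and $g_o$ coincide at $N=1$ because the compounding operator $C$ degenerates to a single target, then invoking the convex-loss group-averaging inequality of Chen et al.\ for $N\geq 2$). The only difference is that where the paper simply asserts the chain $\mathbb{E}\,L(\theta_0,\hat{\theta}_{g_o}(X))\geq\mathbb{E}\,L(\theta_0,\hat{\theta}_{g_x}(X))$ from the proper-subgroup relation, you explicitly justify that middle step via the coset factorization $\mathbb{E}_{g_x}=\mathbb{E}_{g_x/g_o}\mathbb{E}_{g_o}$ and a second application of Jensen's inequality, which is a faithful (and slightly more careful) rendering of the argument the paper leaves implicit.
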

\begin{proof}
    When $n=1$, we can easily get that Eq.~\ref{eq:x_n} is equivalent to Eq.~\ref{eq:Cx_n}: $g_x = g_o$.

    When $N\geq2$, assuming exact invariance holds, we consider an estimator $\hat{\theta}(X), X \in \chi $ of $\theta_0$ and its $g_o, g_x$ augmented versions $ \hat{\theta}_{g_o}(X) = \mathbb{E}_{g_o}\hat{\theta}(g_oX), \hat{\theta}_{g_x}(X) = \mathbb{E}_{g_x}\hat{\theta}(g_xX)$.
    Based on \cite{chen2020group}, for any convex loss function $L(\theta_0, \cdot)$, the following holds:
    \begin{align}
    \label{eq:eg}
        \mathbb{E} L(\theta_0, \hat{\theta}(X)) \ge \mathbb{E} L(\theta_0, \hat{\theta}_{g_o}(X)) , X \in \chi_{g_o}, \\
    \label{eq:eh}
        \mathbb{E} L(\theta_0, \hat{\theta}(X)) \ge \mathbb{E} L(\theta_0, \hat{\theta}_{g_x}(X)), X \in \chi_{g_x}.
    \end{align}
     As  $g_o$ is a proper sub-group of $g_x$ when $N \ge 2$, the feature space of $g_o$ is a proper subspace of $g_x$. 
     Based on this, we can conclude the following:
    \begin{align}
        \mathbb{E} L(\theta_0, \hat{\theta}(X)) \ge \mathbb{E} L(\theta_0, \hat{\theta}_{g_o}(X)) \ge \mathbb{E} L(\theta_0, \hat{\theta}_{g_x}(X)) . \notag
    \end{align}
    Thus, using $g_x$ leads to a lower error than $g_o$.
    \hfill $\Box$
\end{proof}
Proposition~\ref{prop:h_e_g} shows how $g_o$ is a special case of $g_x$ when $N=1$. It also indicates that $g_x$ yields better results than $g_o$ when $N\geq 2$. Derived from the above propositions, our proposed SCMix is an empirical $g_x$ to obtain $ {C}(\{X_t\}_1^N)$.

\section{Experiments}

\subsection{Datasets and Implementation Details}
\noindent \textbf{Datasets.} We evaluate our method on a popular scenario, which transfers the information from a synthesis domain to a real one. For the synthesis domain, we use either GTA5 dataset~\cite{GTA} containing 24,966 images with resolution of $1,914 \times 1,052$, or the SYNTHIA dataset~\cite{synthia} consisting of 9,400 images with resolution of $1,280 \times 720$. For the real target domain, we use C-Driving dataset which consists of images with resolution of $1,280 \times 720$ collected from different weather conditions. In particular, it contains 14,697 rainy, snowy, and cloudy images as compound target domain and 627 overcast images as open domain. We further introduce Cityscapes~\cite{cityscapes}, KITTI~\cite{kitti}, and WildDash~\cite{wilddash} along with the open domain as the OpenSet to evaluate the generalization ability on the unseen domains.

\noindent \textbf{Implementation Details.}
Inspired by the recent SoTA UDA setting~\cite{daformer,xie2023sepico,camix,chen2022pipa}, we first introduce transformer-based framework into OCDA task, which consists of a SegFormer MiT-B5~\cite{xie2021segformer} backbone pre-trained on ImageNet-1k~\cite{imagenet} and an ASPP layer~\cite{daformer} with dilation rates of [1, 6, 12,  18]. The output map is up-sampled and operated by a softmax layer to match the input size. AdamW~\cite{adamw} is the optimizer with a learning rate of $6 \times 10^{-5}$ for the backbone and $10 \times$ larger for the rest. Warmup~\cite{daformer} strategy is leveraged in the first 1,500 iterations.
We train the model for 40k iterations on a single NVIDIA RTX 4090 GPU with a batch size of 2. The exponential moving average parameter of the teacher network is 0.999. Following DACS~\cite{dacs}, we utilize the same data augmentation after mixing, including color jitter and Gaussian blur, while $m$ and $\tau$ are set to 0.999 and 0.968, respectively. For the SCMix, $N_c=3$ and $G=[2,4,8]$ are set by default for all the experiments.


 
\subsection{Comparison with Domain Adaptation}
We comprehensively compare the adaptation performance of our approach with existing state-of-the-art OCDA approaches on GTA5 $\rightarrow$ C-Driving. Among them, CDAS~\cite{cdas} is the first work for OCDA. CSFU~\cite{csfu}, DHA~\cite{dha} and ML-BPM~\cite{mlbpm} all adapt subdomain separation. CSFU~\cite{csfu} engages a GAN framework, while DHA~\cite{dha} further introduces a multi-discriminator to minimize the domain gaps. ML-BPM~\cite{mlbpm} employs a self-training framework and multi-teacher distillation. We also provide the non-adapted results, tagged as ``No Adaptation'', which serves as the baseline for this task.

Tab.~\ref{tab:gta} illustrates the adaptation results on task GTA5 $\rightarrow$ C-Driving. By introducing the novel cross-compound mixing strategy to improve the domain generalization performance, the proposed method achieves the state-of-the-art mIoU of 46.1\%. This yields an improvement of 5.9\% compared with the second-best method,  ML-BPM~\cite{mlbpm}. Our method significantly outperforms the previous works by greatly improving some hard classes, e.g., `Rider' and `Bus'. The comparison on task SYNTHIA $\rightarrow$ C-Driving is shown in Tab.~\ref{tab:synthia}. We calculate the mIoU results of 16 categories as well as 11 categories following the previous works. The proposed method achieves the best results, with mIoU of 38.6\% for 16 categories and 46.7\% for 11 categories. By leveraging transformer-based networks and a novel mixing approach supported by our theory, we have significantly improved the results on this task by a large margin, leading to a new benchmark on the OCDA tasks.


\begin{table}[!h]
\centering
\footnotesize
\setlength\tabcolsep{3 pt}
\begin{tabular}{l|l|ccc|ccc}
\hline
\multirow{2}{*}{Architecture} & \multirow{2}{*}{Method} & \multicolumn{3}{c|}{GTA5$\rightarrow$C-Driving} & \multicolumn{3}{c}{SYN$\rightarrow$C-Driving} \\ \cline{3-8} 
 &  & C & O & O+C & C & O & O+C \\ \hline
\multirow{3}{*}{DeepLabv2} & DACS & 36.6 & 39.7 & 38.2 & 36.5 & 36.8 & 36.7 \\
 & ML-BPM & \textbf{40.2} & 40.8 & 40.5 & 40.0 & 36.9 & 38.5 \\
 & SCMix & 39.6 & \textbf{42.8} & \textbf{41.2} & \textbf{40.6} & \textbf{38.2} & \textbf{39.4} \\ \hline
\multirow{2}{*}{Swin+ASPP} & DACS & 36.2 & 40.6 & 38.4 &  40.1 & 43.0 & 41.6 \\
 & SCMix &\textbf{ 43.7 }& \textbf{47.0} & \textbf{45.4} & \textbf{43.5} & \textbf{46.4} & \textbf{45.0} \\ \hline
\multirow{3}{*}{SegFormer} & DACS & 42.1 & 46.3 & 44.2 & 43.6 & 47.3 & 45.5 \\
& DAFormer& 43.3 & 47.0 & 45.2 & 43.1 & 46.2 & 44.7 \\
 & SCMix & \textbf{46.1 }&\textbf{ 50.7} & \textbf{48.4} & \textbf{46.7} & \textbf{50.1} & \textbf{48.4} \\ \hline
\end{tabular}
\caption{Comparison on different architectures. The comparison presents the evaluation performance on the compound domains (C) and open domain (O) of the C-Driving dataset.}
\label{tab:archi}
\end{table}

\subsection{Comparison with Domain Generalization}
We evaluate the domain generalization of the proposed approach against existing OCDA approaches~\cite{cdas,dha,mlbpm}. We also include the latest domain generalization (DG) approaches, e.g., RobustNet~\cite{robustnet} and SHADE~\cite{shade}. We trained all the OCDA approaches with labeled source images and unlabeled compound target images on the C-Driving dataset, while the DG methods were only trained with the source domain. Tab.~\ref{tab:openset} shows the comparative results on the tasks of GTA5 $\rightarrow$ OpenSet and SYNTHIA $\rightarrow$ OpenSet. Though never having seen the target domains, SHADE achieved very promising average results of 39.6 on the GTA5 $\rightarrow$ OpenSet. On the other hand, considering the ease of acquisition of real images and unpredictable weather conditions, OCDA methods can achieve broader applicability than DG methods. For instance, the second-best OCDA method ML-BPM still outperforms the DG methods, indicating the necessity of developing OCDA methods. Our approach outperforms all of the listed OCDA approaches and the DG approach in the table. It has achieved significant improvements compared to the second-best OCDA method, with gains of 9.3\% and 12.1\% achieved on two tasks, respectively. The performance improvement of our method confirms our theoretical claim that mixing the compound targets is beneficial for domain generalization and provides evidence of the effectiveness of SCMix.

\subsection{Comparison of Network Architecture}
We have further evaluate the versatility and robustness of our proposed SCMix method across various architectures. Specifically, we compare our approach against three distinct architectures: DeepLabv2~\cite{deeplab}, Swin+ASPP~\cite{liu2021swin}, and SegFormer. As detailed in Tab.~\ref{tab:archi}, SCMix consistently outperforms other methods in both GTA5$\rightarrow$C-Driving and SYNTHIA$\rightarrow$C-Driving scenarios, regardless of the underlying architecture.
In the context of DeepLabv2, while ML-BPM offers a modest enhancement in results for the open domain, SCMix delivers a considerably more substantial boost. 
With the SegFormer architecture, a notable improvement is observed in the open set, reinforcing our theoretical promise for the OCDA task.

\subsection{Analytical Study}
To evaluate the proposed SCMix and better understand its contribution, we perform the following experiments.

\noindent$\bullet$\quad\textbf{Baseline.} As listed in Tab.~\ref{tab:mix}, even a powerful SegFormer struggle with domain gaps, indicating the necessity of domain adaptation.  The mean teacher strategy~\cite{meanteacher}, effective for self-supervised models, can worsen the performance by 3.5 and 3.0 mIoU in certain domains due to incorrect pseudo labels from the domain gap. Thus, a domain-mixed approach is crucial for stable self-training.


\begin{figure}[t]
\centering
\includegraphics[width=0.92\linewidth]{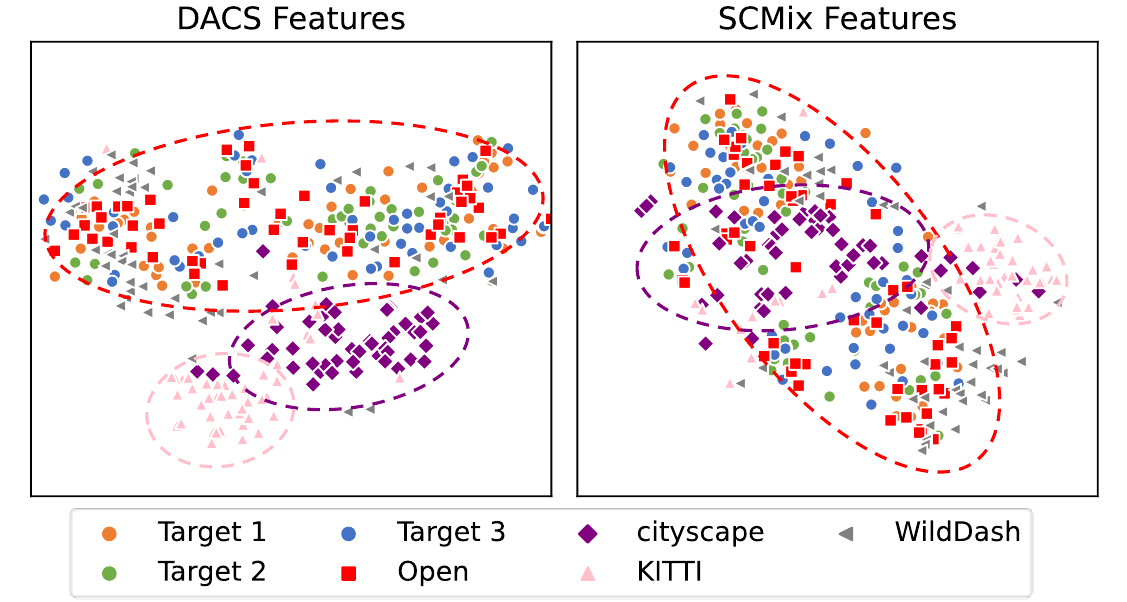}
\caption{T-SNE embedding of backbone features by adapting DACS and SCMix on the target and unseen domains. }
\label{fig:tsne}%
\end{figure}

\begin{table}[t]
\centering
\small
\setlength\tabcolsep{7 pt}
\begin{tabular}{l|cc}
\toprule
Method & GTA5$\rightarrow$C & GTA5$\rightarrow$O \\ \midrule
SegFormer Network& 32.0 (+0.0) & 33.9 (+0.0) \\
+ Mean Teacher (MT)& 28.5 (-3.5) & 30.9 (-3.0) \\\hline
+ MT + CutMix & 40.3 (+8.3) & 44.3 (+10.4) \\
+ MT + CowMix & 40.1 (+8.1) & 44.8  (+10.9) \\
+ MT + fMix & 41.8 (+9.8) & 45.6 (+11.7) \\
+ DACS & 42.1 (+10.1) & 46.3 (+12.4) \\ 
+ SCMix & 46.1 (+\textbf{14.1}) & 50.7 (+\textbf{16.8}) \\ \bottomrule
\end{tabular}
\caption{Comparison with different mixing strategy.}
\label{tab:mix}
\end{table}

\noindent$\bullet$\quad\textbf{Comparison with Different Mixing.} We compare the proposed multi-target mixing algorithm SCMix with four single-target mixing algorithms in Tab.~\ref{tab:mix}. CutMix~\cite{cutmix} cuts out a rectangular region from a source image and pastes it onto a target domain image, achieving a significant performance increase of +8.3 and +10.4 in the target and open domains. CowMix~\cite{cowmix} produces a stronger perturbation, but harms local semantic consistency, limiting its performance improvement. fMix~\cite{fmix} generates masks of arbitrary shape with large connected areas, improving CutMix by +1.7 and +1.3 on the compound target and open domains, respectively. Although DACS maintains semantic consistency with labels, its performance improvement is limited. However, SCMix significantly outperforms CutMix by 6.0 and 6.2 mIoU on compound target and open domains, respectively, nearly twice the improvement achieved by DACS. 

To gain more insights into SCMix on the improved generalization for the unseen domains, Fig.~\ref{fig:tsne} shows the backbone features of target and unseen domains. It is evident that SCMix exhibits a greater overlap between the target and unseen domain distributions, indicating better generalization and robustness of SCMix against unseen factors.



\section{Conclusion}
In this paper, we define a generalization bound for the OCDA task and analyze the limitations of previous divide-and-conquer methods. Upon this, we propose Stochastic Compound Mixing (SCMix) as a novel and efficient method to reduce the divergence between source and mixed target distributions. The superiority of SCMix is supported by theoretical analysis, which proves that SCMix can be considered a generalized extension of single-target mixing and has a lower empirical risk. The effectiveness of the method is validated on two standard benchmarks quantitatively.

\bibliography{aaai24}

\clearpage
\appendix

\section{Sensitive Study}
\begin{table}[t]
\centering
\small
\setlength\tabcolsep{3pt}
\begin{tabular}{lccclcc}
\cline{1-3} \cline{5-7}
$N_c$ &  GTA5$\rightarrow$C &  GTA5$\rightarrow$O &  & $G$ &  GTA5$\rightarrow$C &  GTA5$\rightarrow$O \\ \cline{1-3} \cline{5-7} 
1 & 42.7 & 47.6 &  & [1, 2] & 40.3 & 45.7 \\
2 & 45.3 & 49.8 &  & [2, 4] & 45.2 & 49.6 \\
3$^\dag$ & \textbf{46.1} & \textbf{50.7} &  & [4, 8] & 45.3 & 49.3 \\
4 & 45.7 & 50.3 &  & [8, 16] & 42.4 & 47.6 \\
5 & 45.6 & 50.6 &  & [2, 4, 8]$^\dag$ & \textbf{46.1} & \textbf{50.7} \\ \cline{1-3} \cline{5-7} 
\end{tabular}
\caption{Sensitive study. Default parameters marked with $^\dag$. }
\label{tab:ablation}
\end{table}

\noindent\quad\textbf{Parameter Analysis on SCMix.} We analyze the impact of two key factors in SCMix: the number of target images ($N_c$) and the grid set $G$. As shown in the left part of Tab.~\ref{tab:ablation}, when $N_c=1$, SCMix is similar to DACS but still improves slightly due to the grid-wise class mix. When $N_c$ is increased to 3, the performance peaks for both adaptation and generalization, validating the theory. However, further increasing $N_c$ may not lead to continued performance improvement. Regarding the choice of $G$, the right part of Tab.~\ref{tab:ablation} shows that there is a reasonable selection range based on network and training image size. $G$ that is too small or too large can limit the effectiveness of SCMix. The optimal choice is $G=[2, 4, 8]$ under the experimental setup.

\section{Proof of OCDA Bound}

\begin{table*}[t]
\centering
\small
\setlength\tabcolsep{4.5 pt}
\begin{tabular}{cccccc}\toprule
 & Source Data & Source Label & Target Data & Multi Target & Unseen Target \\\midrule
Unsupervised Domain Adaptation&  \gright & \gright  & \gright & \rnot & \rnot \\
Multi-target Domian Adaptation&  \gright & \gright  & \gright  & \gright  &  \rnot\\
Open Compound Domain Adaptation&  \gright & \gright  & \gright  & \gright  & \gright \\\bottomrule
\end{tabular}
\caption{Comparisons of different adaptation settings. \textbf{UDA}: unsupervised domain adaptation, \textbf{MTDA}: multi-target domain adaptation, \textbf{OCDA}: open compound domain adaptation. }
\label{tab:diff}
\end{table*}

We first give out the difference of different setting and their corresponding abbreviations in Tab.\ref{tab:diff}. 
Let the data and label spaces be represented by $\mathcal{X}$ and $\mathcal{Y}$ respectively, and a mapping $h:\mathcal{X}\rightarrow\mathcal{Y}$, such that $h\in \mathcal{H}$ is a set of candidate hypothesis. Existing DA bounds idealize the target domain as all the target domain:
\begin{theorem}
\textbf{\rm{(UDA Learning Bound~\cite{ben2010theory})}} Let $R^\mathcal{S}$, $R^\mathcal{T}$ be the generalization error on the source domain $\mathcal{D}^\mathcal{S}$ and the target domain $\mathcal{D}^\mathcal{T}$, respectively. Given the risk of a hypothesis $h \in \mathcal{H}$, the target risk is bounded by: 
\begin{equation}
R^\mathcal{T}(h) \leq R^\mathcal{S}(h) + d_{\mathcal{H}\Delta\mathcal{H}}(\mathcal{D}^\mathcal{S},\mathcal{D}^\mathcal{T}) + \lambda ,
\end{equation}
where $d_{\mathcal{H}\Delta\mathcal{H}}$ is the $\mathcal{H}\Delta\mathcal{H}$-distance between $\mathcal{D}^\mathcal{S}$ and $\mathcal{D}^\mathcal{T}$,
\begin{equation}
\begin{split}
d_{\mathcal{H}\Delta\mathcal{H}} \triangleq \sup\limits_{h,h'\in\mathcal{H}}  |&\mathbb{E}_{x \in \mathcal{D}^\mathcal{S}}[h(x)\neq h'(x)] \\
 - &\mathbb{E}_{x \in \mathcal{D}^\mathcal{T}}[h(x)\neq h'(x)]| ,
\end{split}
\end{equation}
and $\lambda:=\min_{h^*\in\mathcal{H}}(R^\mathcal{S}(h^*)+R^\mathcal{T}(h^*)) $ corresponds to the minimal total risk over both domains.
\end{theorem}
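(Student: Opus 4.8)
The plan is to reproduce the classical argument of Ben-David et al. in the notation of the statement. I would write $R^\mathcal{S}(h) = \epsilon_{\mathcal{D}^\mathcal{S}}(h, f^\mathcal{S})$ and $R^\mathcal{T}(h) = \epsilon_{\mathcal{D}^\mathcal{T}}(h, f^\mathcal{T})$, where $f^\mathcal{S}, f^\mathcal{T}$ are the true source and target labeling functions and $\epsilon_{\mathcal{D}}(g, g') := \mathbb{E}_{x \in \mathcal{D}}[g(x) \neq g'(x)]$ is the expected disagreement of two hypotheses under $\mathcal{D}$. Let $h^* \in \mathcal{H}$ attain the minimum in $\lambda$, so $R^\mathcal{S}(h^*) + R^\mathcal{T}(h^*) = \lambda$. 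The entire proof hinges on the triangle inequality for disagreements: for any $g_1, g_2, g_3$ and any $\mathcal{D}$, one has $\epsilon_{\mathcal{D}}(g_1, g_2) \leq \epsilon_{\mathcal{D}}(g_1, g_3) + \epsilon_{\mathcal{D}}(g_3, g_2)$, which holds because $\mathds{1}[g_1 \neq g_2] \leq \mathds{1}[g_1 \neq g_3] + \mathds{1}[g_3 \neq g_2]$ pointwise and expectation is monotone.

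First I would route the target error through $h^*$: $R^\mathcal{T}(h) = \epsilon_{\mathcal{D}^\mathcal{T}}(h, f^\mathcal{T}) \leq \epsilon_{\mathcal{D}^\mathcal{T}}(h, h^*) + R^\mathcal{T}(h^*)$. Next I would swap the target disagreement for its source analogue by adding and subtracting, $\epsilon_{\mathcal{D}^\mathcal{T}}(h, h^*) \leq \epsilon_{\mathcal{D}^\mathcal{S}}(h, h^*) + |\epsilon_{\mathcal{D}^\mathcal{T}}(h, h^*) - \epsilon_{\mathcal{D}^\mathcal{S}}(h, h^*)|$, and then route the source disagreement through $f^\mathcal{S}$, giving $\epsilon_{\mathcal{D}^\mathcal{S}}(h, h^*) \leq R^\mathcal{S}(h) + R^\mathcal{S}(h^*)$. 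Chaining these three inequalities and grouping $R^\mathcal{S}(h^*) + R^\mathcal{T}(h^*) = \lambda$ would yield $R^\mathcal{T}(h) \leq R^\mathcal{S}(h) + |\epsilon_{\mathcal{D}^\mathcal{T}}(h, h^*) - \epsilon_{\mathcal{D}^\mathcal{S}}(h, h^*)| + \lambda$.

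The only nonroutine step, and what I would regard as the crux, is identifying the discrepancy term $|\epsilon_{\mathcal{D}^\mathcal{T}}(h, h^*) - \epsilon_{\mathcal{D}^\mathcal{S}}(h, h^*)|$ with the $\mathcal{H}\Delta\mathcal{H}$-distance. Since both $h$ and $h^*$ lie in $\mathcal{H}$, the pair $(h, h^*)$ is admissible in the supremum defining $d_{\mathcal{H}\Delta\mathcal{H}}(\mathcal{D}^\mathcal{S}, \mathcal{D}^\mathcal{T})$, and by definition $\epsilon_{\mathcal{D}}(h, h^*) = \mathbb{E}_{x \in \mathcal{D}}[h(x) \neq h^*(x)]$; hence the difference for this particular pair is at most the supremum over all admissible pairs, which is precisely $d_{\mathcal{H}\Delta\mathcal{H}}(\mathcal{D}^\mathcal{S}, \mathcal{D}^\mathcal{T})$. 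Because the statement takes this supremum directly over hypothesis pairs rather than over symmetric-difference indicators, no spurious factor of $\tfrac12$ or $2$ appears and the bound comes out clean. Substituting this inequality into the chain above completes the proof; the same scaffold is exactly what the OCDA bound of Theorem~1 generalizes, by replacing the single target $\mathcal{D}^\mathcal{T}$ with the joint subdomains $\mathcal{J}_{ij}$ and summing the resulting discrepancy terms.
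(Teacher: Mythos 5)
Your proposal is correct and is exactly the classical argument that the paper defers to by citation: the paper states this theorem as a known result of Ben-David et al.\ without reproducing a proof, and your chain (triangle inequality through $h^*$, bounding $|\epsilon_{\mathcal{D}^\mathcal{T}}(h,h^*)-\epsilon_{\mathcal{D}^\mathcal{S}}(h,h^*)|$ by the supremum defining $d_{\mathcal{H}\Delta\mathcal{H}}$, then grouping $R^\mathcal{S}(h^*)+R^\mathcal{T}(h^*)=\lambda$) is the standard derivation from that reference. Your remark on the absent factor of $\tfrac12$ is also right, since the paper defines $d_{\mathcal{H}\Delta\mathcal{H}}$ directly as the supremum of disagreement differences rather than twice that quantity.
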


However, MTDA believes that the target domain should consist of many subdomains. Therefore, for MTDA, DA risk bound can be easily generalized to multi-target subdomains by considering the data available from each target subdomain individually: 
\begin{theorem}
\textbf{\rm{(MTDA Learning Bound~\cite{ben2010theory})}} $\mathcal{D}^\mathcal{T}$ contains $K$ subdomains, such that $\{\mathcal{D}^\mathcal{T}\}_1^K=\{\mathcal{D}^\mathcal{T}_1,\dots,\mathcal{D}^\mathcal{T}_K\}$. Given the risk of a hypothesis $h \in \mathcal{H}$, the overall target risk is bounded by: 
\begin{equation}
R^\mathcal{T}(h) \leq R^\mathcal{S}(h) + \sum\nolimits_{i=1}^{K} d_{\mathcal{H}\Delta\mathcal{H}}(\mathcal{D}^\mathcal{S},\mathcal{D}^\mathcal{T}_i) + \lambda^{*} ,
\end{equation}
\end{theorem}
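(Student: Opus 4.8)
The plan is to \emph{reduce the multi-target statement to the single-target UDA Learning Bound stated immediately above}, treating the compound target $\mathcal{D}^\mathcal{T}$ as a mixture of its $K$ subdomains $\{\mathcal{D}^\mathcal{T}_1,\dots,\mathcal{D}^\mathcal{T}_K\}$. Writing $\mathcal{D}^\mathcal{T}=\frac{1}{K}\sum_{i=1}^K \mathcal{D}^\mathcal{T}_i$, the overall target risk decomposes as $R^\mathcal{T}(h)=\frac{1}{K}\sum_{i=1}^K R^{\mathcal{T}_i}(h)$ by linearity of the risk in the underlying data distribution. First I would apply the UDA bound \emph{directly} with this mixture playing the role of the target domain; this already delivers a \emph{single} source-risk term $R^\mathcal{S}(h)$ and a \emph{single} combined-risk term, which I rename $\lambda^{*}:=\min_{h^{*}\in\mathcal{H}}(R^\mathcal{S}(h^{*})+R^\mathcal{T}(h^{*}))$. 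The only remaining work is then to rewrite the aggregate distance $d_{\mathcal{H}\Delta\mathcal{H}}(\mathcal{D}^\mathcal{S},\mathcal{D}^\mathcal{T})$ as the stated sum over subdomains.

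The core step is the decomposition of the $\mathcal{H}\Delta\mathcal{H}$-distance under a mixture target. I would fix an arbitrary pair $h,h'\in\mathcal{H}$, expand the disagreement over the mixture via $\mathbb{E}_{x\in\mathcal{D}^\mathcal{T}}[h(x)\neq h'(x)]=\frac{1}{K}\sum_{i=1}^K \mathbb{E}_{x\in\mathcal{D}^\mathcal{T}_i}[h(x)\neq h'(x)]$, and use the trivial identity $\mathbb{E}_{\mathcal{D}^\mathcal{S}}[h\neq h']=\frac{1}{K}\sum_{i=1}^K \mathbb{E}_{\mathcal{D}^\mathcal{S}}[h\neq h']$. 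Subtracting and applying the triangle inequality term-by-term gives
\begin{equation*}
\Big|\,\mathbb{E}_{\mathcal{D}^\mathcal{S}}[h\neq h']-\mathbb{E}_{\mathcal{D}^\mathcal{T}}[h\neq h']\,\Big|\leq \frac{1}{K}\sum_{i=1}^K\Big|\,\mathbb{E}_{\mathcal{D}^\mathcal{S}}[h\neq h']-\mathbb{E}_{\mathcal{D}^\mathcal{T}_i}[h\neq h']\,\Big| .
\end{equation*}
Taking the supremum over $h,h'$ and using that the supremum of a sum is at most the sum of the suprema, each summand is bounded by $d_{\mathcal{H}\Delta\mathcal{H}}(\mathcal{D}^\mathcal{S},\mathcal{D}^\mathcal{T}_i)$, so $d_{\mathcal{H}\Delta\mathcal{H}}(\mathcal{D}^\mathcal{S},\mathcal{D}^\mathcal{T})\leq \frac{1}{K}\sum_{i} d_{\mathcal{H}\Delta\mathcal{H}}(\mathcal{D}^\mathcal{S},\mathcal{D}^\mathcal{T}_i)\leq \sum_{i} d_{\mathcal{H}\Delta\mathcal{H}}(\mathcal{D}^\mathcal{S},\mathcal{D}^\mathcal{T}_i)$, where the last inequality simply drops the normalizing factor $\tfrac{1}{K}\le 1$. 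Chaining this into the UDA bound produces exactly the claimed inequality.

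The main obstacle is not any single hard estimate but the subadditivity-of-the-supremum step: the supremizing pair $(h,h')$ for the mixture need not be the supremizer for any individual subdomain, so one cannot pull the supremum inside the sum as an \emph{equality}, only as the $\leq$ that makes the per-subdomain distances a valid (and deliberately loose) upper bound. I would flag that this is precisely where the bound becomes an overestimate, which is the formal seed of the paper's later argument that divide-and-conquer objectives overcount by ignoring the joint structure. I would also note the argument is agnostic to the mixture weights: any convex combination $\sum_i \alpha_i\mathcal{D}^\mathcal{T}_i$ with $\alpha_i\le 1$ yields the same sum after dropping the weights, so the statement holds regardless of the unknown subdomain proportions. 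A secondary subtlety is only bookkeeping: $\lambda^{*}$ must be read as the minimal total risk of the mixture target, which the single-target bound supplies verbatim.
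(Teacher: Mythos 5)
Your derivation is sound, but note that the paper offers no proof of this theorem at all: it is imported from Ben-David et al.\ with the one-line remark that the UDA bound ``can be easily generalized to multi-target subdomains by considering the data available from each target subdomain individually.'' The route the paper implies is therefore to instantiate the single-target UDA bound once per subdomain $\mathcal{D}^\mathcal{T}_i$ and combine the $K$ inequalities through $R^\mathcal{T}(h)=\frac{1}{K}\sum_{i}R^{\mathcal{T}_i}(h)$, which leaves $K$ copies of $R^\mathcal{S}(h)$ (harmless after averaging) but also $K$ separate ideal-risk terms $\lambda_i=\min_{h^*}(R^\mathcal{S}(h^*)+R^{\mathcal{T}_i}(h^*))$ that must be absorbed into $\lambda^{*}$, e.g.\ as their average or maximum. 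Your route is genuinely different: you apply the UDA bound \emph{once}, to the mixture, and push all the work into the distance term via the triangle inequality and subadditivity of the supremum, $d_{\mathcal{H}\Delta\mathcal{H}}(\mathcal{D}^\mathcal{S},\mathcal{D}^\mathcal{T})\le\frac{1}{K}\sum_i d_{\mathcal{H}\Delta\mathcal{H}}(\mathcal{D}^\mathcal{S},\mathcal{D}^\mathcal{T}_i)$. This buys two things the per-subdomain route does not: an unambiguous $\lambda^{*}$ (exactly the mixture's joint optimal risk, matching the UDA statement verbatim), and a strictly sharper intermediate bound retaining the factor $\frac{1}{K}$, which you then deliberately discard to match the stated form; your observation that the argument is insensitive to the (unknown) mixture weights is also correct and worth having, since the theorem statement never fixes the subdomain proportions. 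Every step checks out against the paper's own (unnormalized) definition of $d_{\mathcal{H}\Delta\mathcal{H}}$. One small corrective to your closing gloss: the paper's later criticism of divide-and-conquer is that the per-subdomain sum \emph{omits} the joint terms $d_{\mathcal{H}\Delta\mathcal{H}}(\mathcal{D}^\mathcal{S},\mathcal{J}_{ij})$ and hence captures only a fraction of the OCDA risk objective, i.e.\ it undercounts that objective; your (true) remark that $\sum_i d_{\mathcal{H}\Delta\mathcal{H}}(\mathcal{D}^\mathcal{S},\mathcal{D}^\mathcal{T}_i)$ overestimates the single mixture distance concerns a different quantity and should not be conflated with the paper's argument.
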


In contrast, similar to DG, OCDA also consider the open domains, which can be treated as unseen subdomains of the overall target domain. Therefore, the bound should not only consider the discrepancy between source and seen target domains but also between the seen and unseen target domains. Inspired by the theory of multi-source DA risk bound in~\cite{ben2010theory}, we propose to calculate the $\mathcal{H}\Delta\mathcal{H}$-distance between the source domain and the combination of target subdomains and attain the proposed OCDA Learning Bound:
\begin{theorem}
\textbf{\rm{(OCDA Learning Bound)}} $\mathcal{D}^\mathcal{T}$ contains $N$ seen subdomains,such that $\{\mathcal{D}^\mathcal{T}\}_1^N=\{\mathcal{D}^\mathcal{T}_1,\dots,\mathcal{D}^\mathcal{T}_N\}$, and $K-N$ unseen subdomains, $N\ll K$. Given the risk of a hypothesis $h \in \mathcal{H}$, the overall target risk is bounded by:
\begin{equation}
\begin{split}
R^\mathcal{T}(h) \leq R^\mathcal{S}(h) +  \sum_{i} \sum_{j \geq i}d_{\mathcal{H}\Delta\mathcal{H}}(\mathcal{D}^\mathcal{S},\mathcal{J}_{ij}) + \lambda^{**} ,
\end{split}
\end{equation}
where $1 \leq i \leq j \leq  N$, and $\mathcal{J}_{i,j}=\dtn_i \otimes   \dots \otimes \dtn_j$ denotes the joint distribution (joint subdomains) by using the jointing operation $\otimes$.
\end{theorem}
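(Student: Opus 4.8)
The plan is to obtain the OCDA bound by iterating the classical two-domain argument of Ben-David et al.~over the joint subdomains, in the same spirit that the MTDA Learning Bound is derived by applying the UDA Learning Bound to each subdomain separately. First I would recall the base per-distribution inequality: for any single distribution $\mathcal{J}$, the standard $\mathcal{H}\Delta\mathcal{H}$ argument yields $R^{\mathcal{J}}(h) \le R^\mathcal{S}(h) + d_{\mathcal{H}\Delta\mathcal{H}}(\mathcal{D}^\mathcal{S},\mathcal{J}) + \lambda_\mathcal{J}$, where $\lambda_\mathcal{J}$ is the combined risk of the optimal shared hypothesis on $\mathcal{D}^\mathcal{S}$ and $\mathcal{J}$. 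The distinguishing feature of the OCDA setting is that the full target, comprising the $N$ seen subdomains \emph{and} the $K-N$ unseen ones with $N \ll K$, must be represented in a way that makes the unseen components reachable, since we never observe them directly.

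The key structural step I would carry out next is the covering claim: I would show that every subdomain of $\mathcal{D}^\mathcal{T}$, whether seen or unseen, lies in the set generated by the jointing operation $\otimes$ applied to the $N$ seen subdomains, so that the family $\{\mathcal{J}_{ij} : 1 \le i \le j \le N\}$ dominates the support of the compound target. Granting this, I would decompose $R^\mathcal{T}(h)$ over this covering family: writing the target as a mixture indexed by the joints $\mathcal{J}_{ij}$ and invoking the per-joint inequality above, the source risk $R^\mathcal{S}(h)$ factors out and the discrepancies accumulate into the double sum $\sum_i \sum_{j \ge i} d_{\mathcal{H}\Delta\mathcal{H}}(\mathcal{D}^\mathcal{S}, \mathcal{J}_{ij})$. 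This is precisely the step that separates the result from the MTDA bound: the diagonal terms $i=j$ recover the conventional per-subdomain discrepancies $d_{\mathcal{H}\Delta\mathcal{H}}(\mathcal{D}^\mathcal{S},\mathcal{D}^\mathcal{T}_i)$, while the off-diagonal terms $i<j$ contribute the discrepancies to the genuinely compound joints, which is where the unseen subdomains are absorbed. Finally I would collect the per-joint combined-risk terms into the single quantity $\lambda^{**} := \min_{h^*\in\mathcal{H}}(R^\mathcal{S}(h^*)+R^\mathcal{T}(h^*))$, arguing that a hypothesis achieving the minimal total risk over the source and the full target simultaneously dominates each $\lambda_\mathcal{J}$, so that the aggregate combined risk is controlled by $\lambda^{**}$.

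The hard part will be the covering claim in the second paragraph: making the jointing operation $\otimes$ precise and proving that the pairwise joints of the $N$ seen subdomains suffice to dominate the $K-N$ unseen subdomains under $N \ll K$. This hinges on the OCDA structural hypothesis that the open/unseen domains are \emph{homogeneous} with the seen compound, i.e., contained in the product (or convex) closure generated by the seen subdomains; the triangle inequality for $d_{\mathcal{H}\Delta\mathcal{H}}$ would then transfer the discrepancy bounds from the seen joints to the unseen components without introducing additional terms. I expect the remaining steps—the per-joint base bound and the summation—to be routine once this covering is established, so the proof stands or falls on formalizing how the joints $\mathcal{J}_{ij}$ reach beyond the seen subdomains to the open target.
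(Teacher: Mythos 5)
Your reconstruction matches the paper's intended route, and in fact is more explicit than what the paper provides: the supplement's ``Proof of OCDA Bound'' section contains no actual derivation. It states the UDA bound of Ben-David et al., observes that the MTDA bound follows by applying it to each seen subdomain individually, and then simply \emph{asserts} the OCDA bound by analogy, with the joints $\mathcal{J}_{ij}$ replacing the individual subdomains; the mixture decomposition, the per-joint base inequality, and the aggregation of the combined-risk terms that you spell out are all left implicit. Two local corrections to your outline. First, $\mathcal{J}_{i,j}=\mathcal{D}^\mathcal{T}_i\otimes\cdots\otimes\mathcal{D}^\mathcal{T}_j$ is the joint over the whole contiguous index range from $i$ to $j$, not a pairwise joint, so the double sum enumerates ranges rather than pairs (this does not change the structure of your argument). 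Relatedly, for $d_{\mathcal{H}\Delta\mathcal{H}}(\mathcal{D}^\mathcal{S},\mathcal{J}_{ij})$ to be well defined with the same hypothesis class $\mathcal{H}$ on $\mathcal{X}$, the jointing operation must be read as a mixture on $\mathcal{X}$, not a product measure on $\mathcal{X}^{j-i+1}$; this reading is also what makes your summation step work. Second, your aggregation of the $\lambda_\mathcal{J}$ terms is wrong as phrased: a hypothesis $h^*$ minimizing $R^\mathcal{S}+R^\mathcal{T}$ does not dominate each $\lambda_\mathcal{J}$, since the risk on a mixture component can exceed the risk on the mixture by a factor $1/\beta_\mathcal{J}$. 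The standard fix is to carry the convex weights through: decompose $R^\mathcal{T}(h)=\sum_\mathcal{J}\beta_\mathcal{J}R^\mathcal{J}(h)$, apply the base inequality with weight $\beta_\mathcal{J}$, and use $\sum_\mathcal{J}\beta_\mathcal{J}\lambda_\mathcal{J}\le\min_{h}\bigl(R^\mathcal{S}(h)+\sum_\mathcal{J}\beta_\mathcal{J}R^\mathcal{J}(h)\bigr)=\lambda^{**}$, after which the weighted discrepancies are bounded by the unweighted double sum because every term is nonnegative.

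The genuine gap is exactly the one you flag yourself, and you should be aware that it is not a hard lemma awaiting proof: the covering claim is \emph{unprovable} from the theorem's stated hypotheses, and the theorem is false without it. The statement assumes only that $\mathcal{D}^\mathcal{T}$ contains $N$ seen and $K-N$ unseen subdomains, with nothing tying the unseen ones to the seen ones. Concretely, let the source and all $N$ seen subdomains share one distribution $P$ with labeling $f$, and let one unseen subdomain carry mixture weight $\alpha>0$ on a disjoint-support distribution $Q$; let $\mathcal{H}$ contain $h_1$ correct on both $P$ and $Q$ and $h_0$ correct on $P$ but wrong on $Q$. Then every $d_{\mathcal{H}\Delta\mathcal{H}}(\mathcal{D}^\mathcal{S},\mathcal{J}_{ij})$ vanishes (all joints are mixtures of copies of $P$), $\lambda^{**}=0$ via $h_1$, and $R^\mathcal{S}(h_0)=0$, yet $R^\mathcal{T}(h_0)=\alpha>0$, violating the claimed bound. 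So the homogeneity of the open domains with the seen compound --- your ``covering claim'' --- must be added as an explicit structural \emph{assumption} (e.g., every unseen subdomain lies in the mixture closure of the $\mathcal{J}_{ij}$, after which convexity of $d_{\mathcal{H}\Delta\mathcal{H}}$ in its second argument transfers the bound as you describe); it cannot be derived. The paper's own supplement silently skips this point, so your proposal, once repaired as above and with the covering claim promoted from lemma to hypothesis, is the honest version of the paper's argument rather than a divergence from it.
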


\section{More Mathematical Details}
\setcounter{proposition}{1}
\setcounter{proof}{1}

\begin{proposition}
\label{prop:h_ge_g}
   Single-target mixing  ($g_{o} \in G$) is a sub-group of the SCMix ($g_{x} \in G$), i.e., $g_{o} \in g_{x}$. Additionally, the sample space of $g_o$ is a subspace of $g_x$: $\chi_{g_o}$ $\in$ $\chi_{g_x}$.
\end{proposition}
\begin{proof} 
Assuming exact invariance holds, we can consider the operation of $g_o$ as:
\begin{align}
    g_o ( {X}^\mathcal{S} ,  {X}^\mathcal{T}_i) = ({X}^\mathcal{S},{X}^\mathcal{T}_i, {X}^\mathcal{S} \oplus {X}^\mathcal{T}_i), \notag
\end{align}
where $\oplus$ denotes the augmented filled sample spaces in between. 
Thus, for all target domains, we can write: 
\begin{equation} 
\small
\label{eq:x_n}
    g_o ({X}^\mathcal{S} , \{X^\mathcal{T}\}_1^N) = ({X}^\mathcal{S} ,\{X^\mathcal{T}\}_1^N, {X}^\mathcal{S} \oplus \{X^\mathcal{T}\}_1^N ). 
\end{equation}
For $g_x$ operation, we have:
\begin{equation}
\small
\label{eq:Cx_n}
g_x ({X}^\mathcal{S} , \{X^\mathcal{T}\}_1^N )  = ({X}^\mathcal{T} ,\{X^\mathcal{T}\}_1^N,{X}^\mathcal{S} \oplus {C}(\{X^\mathcal{T}\}_1^N)), 
\end{equation}
where ${C}$ donates compounding target sample spaces and ${C}(X^\mathcal{T}_i) =\{ X^\mathcal{T}_i,  \{{X}^\mathcal{T}_i |_\oplus \}_i^N \}$. Thus for $\{X^\mathcal{T}\}_1^N$:
\begin{equation}
\small
\begin{split}
    {C}(\{X^\mathcal{T}\}_1^N)) = \sum_{i =1}^N {C}(X^\mathcal{T}_i) \notag = \{\{X^\mathcal{T}\}_1^N, \underbrace{...}_{\frac{n^2 -N}{2 } \text{terms omitted.}}  \}   \ni \{X^\mathcal{T}\}_1^N .
\end{split}
\end{equation}

The omitted terms are shown as follows: 
\begin{equation}
\small
\begin{split}
    &  {C}(X^\mathcal{T}_1) =\{ X^\mathcal{T}_1,  \{{X}^\mathcal{T}_1 \oplus {X}^\mathcal{T}_2, ...
   , {X}^\mathcal{T}_1 \oplus {X}^\mathcal{T}_2\oplus ... \oplus {X}^\mathcal{T}_{N} 
   \} \}, \notag \\
    &  {C}(X^\mathcal{T}_2) =\{ X^\mathcal{T}_2,  \{{X}^\mathcal{T}_2 \oplus {X}^\mathcal{T}_3, ...
   , {X}^\mathcal{T}_2 \oplus {X}^\mathcal{T}_3\oplus ... \oplus {X}^\mathcal{T}_{N} \}\},  \notag \\
    & ...,\notag \\
    &  {C}(X^\mathcal{T}_N) = \{ X^\mathcal{T}_N\} ,\notag \\
    & {C}(\{X^\mathcal{T}\}_1^N)) = \sum_{i =1}^N {C}(X^\mathcal{T}_i) = \notag \\
    &\{\{X^\mathcal{T}\}_1^N, \underbrace{...}_{\frac{n^2 -N}{2 } \text{terms omitted.}}  \}   \ni \{X^\mathcal{T}\}_1^N .
\end{split}
\end{equation}

Since $ g_o ({X}^\mathcal{S} , \{X^\mathcal{T}\}_1^N ) \in g_x ({X}^\mathcal{S} , \{X^\mathcal{T}\}_1^N )$, we can infer that $g_o$ is a sub-group of $g_x$ and $\chi _{g_o} \in  \chi _{g_x}$. 
\hfill $\Box$
\end{proof}

\begin{figure*}[t]
\centering
\includegraphics[width=0.95\linewidth]{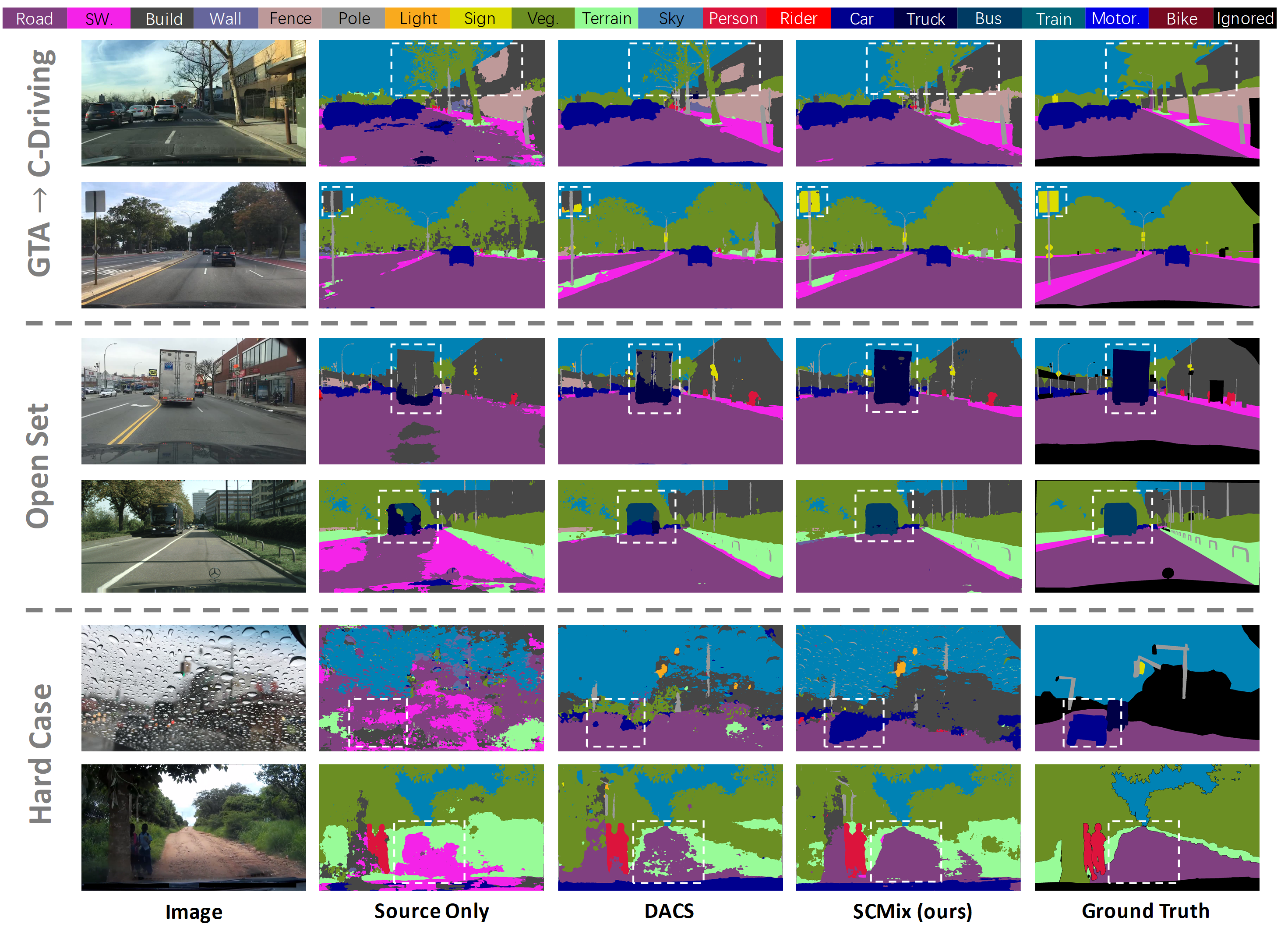}
\caption{Visualization of comparative predictions on GTA $\rightarrow$ C-Driving. }%
\label{fig:sup1}
\end{figure*}

\section{Visualization}
As depicted in Fig.~\ref{fig:sup1}, we present additional comparative results of semantic segmentation for the OCDA task on the GTA $\rightarrow$ C-Driving dataset, utilizing the same SegFormer backbone. Our baseline method, DACS, yields significant improvements in segmentation results and reduces prediction errors compared to the source model, particularly in the target and open unseen domains. Moreover, our approach exhibits better robustness and generalization capabilities when faced with unseen scenes (as indicated by the white dash boxes in the 1st and 2nd rows) or vehicle models (as indicated by the white dash boxes in the 3rd and 4th rows). Additionally, for some unseen factors that were not presented in the training set, such as raindrops on car windows (5th row) and muddy roads (6th row), our method demonstrates excellent performance.

\section{Code and Reproducibility}
Our code framework is built upon the PyTorch-based~\cite{paszke2019pytorch} MMSegmentation Library~\cite{mmseg2020} and we are committed to ensuring that our code is well-structured and highly readable. It has come to our attention that a majority of previous works, with the exception of the pioneering work CDAS~\cite{cdas}, did not make their code publicly available, which harms the development of the field of OCDA. Furthermore, most of our comparative results are derived from the ML-BPM~\cite{mlbpm} method under the same experimental setting. In light of this, and with a view to better contributing to the development of the OCDA community and ensuring the reproducibility of the proposed SCMix, we will release our code upon acceptance of the article.

\end{document}